\def\eqref#1{equation~\ref{#1}}
\def\1{\bm{1}}
\DeclareMathAlphabet{\mathsfit}{\encodingdefault}{\sfdefault}{m}{sl}
\SetMathAlphabet{\mathsfit}{bold}{\encodingdefault}{\sfdefault}{bx}{n}
\def\gA{{\mathcal{A}}}
\def\gS{{\mathcal{S}}}
\newcommand{\E}[2]{\mathbb{E}_{#1} \left[#2\right]}
\DeclareMathOperator*{\argmax}{arg\,max}
\theoremstyle{definition}
\newtheorem{dfn}{Definition}[section]
\newtheorem{prop}[dfn]{Proposition}
\newtheorem{lem}[dfn]{Lemma}
\newtheorem{thm}[dfn]{Theorem}
\newtheorem{ass}[dfn]{Assumption}
\title{Strategically Conservative Q-Learning}
\author{
  Yutaka Shimizu \\
  University of California Berkeley\\
  \texttt{purewater0901@berkeley.edu} \\
  \And
  Joey Hong \\
  University of California Berkeley\\
  \texttt{joey\_hong@berkeley.edu} \\
  \And
  Sergey Levine \\
  University of California Berkeley\\
  \texttt{sergey.levine@berkeley.edu} \\
  \And
  Masayoshi Tomizuka \\
  University of California Berkeley\\
  \texttt{tomizuka@berkeley.edu} \\
}
\begin{document}

\maketitle

\begin{abstract} \label{sec:abstract}
  Offline reinforcement learning (RL) is a compelling paradigm to extend RL's practical utility by leveraging pre-collected, static datasets, thereby avoiding the limitations associated with collecting online interactions. 
  The major difficulty in offline RL is mitigating the impact of approximation errors when encountering out-of-distribution (OOD) actions; doing so ineffectively will lead to policies that prefer OOD actions, which can lead to unexpected and potentially catastrophic results. Despite the variety of works proposed to address this issue, they tend to excessively suppress the value function in and around OOD regions, resulting in overly pessimistic value estimates.
  In this paper, we propose a novel framework called Strategically Conservative Q-Learning (SCQ) that distinguishes between OOD data that is easy and hard to estimate, ultimately resulting in less conservative value estimates. 
  Our approach exploits the inherent strengths of neural networks to interpolate, while carefully navigating their limitations in extrapolation, to obtain pessimistic yet still property calibrated value estimates. 
  Theoretical analysis also shows that the value function learned by SCQ is still conservative, but potentially much less  so than that of Conservative Q-learning (CQL). 
  Finally, extensive evaluation on the D4RL benchmark tasks shows our proposed method outperforms state-of-the-art methods. 
  % NOTE(Joey): Don't include github links for now because the submission is anonymous.
  Our code is available through \url{https://github.com/purewater0901/SCQ}.
\end{abstract}

%%%%%%%%%%%%%%%%%%%%%%%%%%%%%%%%%%%%%%%%%%%%%%%%%%%%%%%%%%%%%%%%%%%%%%
%%%%%%%%%%%%%%%%%%%%%%%%%%%%%%%%%%%%%%%%%%%%%%%%%%%%%%%%%%%%%%%%%%%%%%
%%%%%%%%%%%%%%%%%%%%%%%%%%%%%%%%%%%%%%%%%%%%%%%%%%%%%%%%%%%%%%%%%%%%%%
% Introduction
\section{Introduction} \label{sec:introduction}
Reinforcement learning (RL)\cite{Sutton1998RL} is a powerful paradigm for decision-making and control, but requires extensive exploration of the environment that can be expensive and dangerous. Offline reinforcement learning \cite{levine2020offline} eliminates such need, allowing learning from static datasets from any unknown behavior policy, without any online environment interaction.
However, designing effective offline RL algorithms comes with new challenges, particularly in counterfactual reasoning outside of the data distribution. Without the possibility of correction via online interaction, value estimation for out-of-distribution (OOD) data is often overly optimistic \cite{fujimoto2018addressing, bear}, leading to extrapolation errors that harm policy learning.  

Existing approaches tackle OOD overestimation in two major ways, by either constraining the learned policy or estimated values. Policy constraint directly constrains the learned policy to be close to the behavior policy \citep{bear}. 
In contrast, the value constraint technique aims to regularize the value function, particularly assigning low values to OOD actions ~\citep{cql,iql}. 
While both paradigms avoid exploitation of OOD actions by the learned policy, they often sacrifice generalization performance by being overly conservative, even in regions of the data where actions can be evaluated accurately ~\citep{calql}.

Hence, what many existing offline RL algorithms lack is a proper balance of conservatism and generalization. Specifically, Conservative Q-learning ~\citep{cql} (CQL) will penalize the value of all actions taken by the learned policy, including ones that are in-distribution; another popular algorithm, Implicit Q-learning ~\citep{iql} (IQL), constrains the value function to be estimated within the data support, but in doing so completely neglects any possibility for generalization outside of the dataset.
The consequences of this over-penalization are illustrated in the first and second diagrams in Fig.\ref{fig:sql-image}. 
While it is important to rely on conservatism to avoid catastrophic OOD behavior, we believe that it is important to discriminate between OOD regions. Particularly, we propose \emph{strategic conservatism}, that will \emph{only penalize the value function in OOD regions where generalization is improbable}.

In this paper, we propose Strategically Conservative Q-learning (SCQ) to address this problem. SCQ is a member of the value constraint methods and inherits some key ideas from CQL \cite{cql}.
A distinctive feature of SCQ compared to other value constraint methods lies in its approach to minimizing the value function not across all OOD regions, but specifically ones sufficiently far from the data distribution. This idea is inspired by an important characteristic of neural networks (NN), that is, NN is not good at extrapolation but is adequate at interpolation\cite{Haley1992Extrapolation, Barnard1992Extrapolation, Arora2019FineGrained, xu2021how, Pete2022ImplicitBehaviorCloning}. 
In other words, the estimated values derived from functions approximated by neural networks have infinitesimal errors from the true values inside and around the data distribution. By leveraging this idea we can show that the estimated state-action values derived from SCQ are the $\varepsilon$-point-wise lower bound of the true values. In addition to its point-wise lower bound property, we demonstrate that the state value obtained by SCQ is less conservative than that derived from CQL in a linear MDP environment, while still serving as a lower bound of the true state value function. These properties theoretically support our statement that SCQ is less conservative than CQL. The right-most picture of Fig.\ref{fig:sql-image} shows these properties of the proposed method. From the picture, we can see our algorithm only penalizes state-action values at points that are far from the dataset.

SCQ can be adapted to many model-free offline RL algorithm. In this paper, we build SCQ on Soft-Actor-Critic (SAC) \cite{sac, sac2, sac3} and evaluate it on Mujoco locomotion and Antmaze tasks against existing state-of-the-art offline RL methods. Additionally, we reduce the dataset size to test its robustness toward data size. Finally, ablation studies of our method also demonstrate that the positive effects of our proposed objective cannot be replicated by architectural changes such as layer normalization, which has recently become a widely recognized as an effective way to deal with distribution shift ~\citep{rebrac}.

\begin{figure}[t]
  \centering
  \includegraphics[scale=0.35]{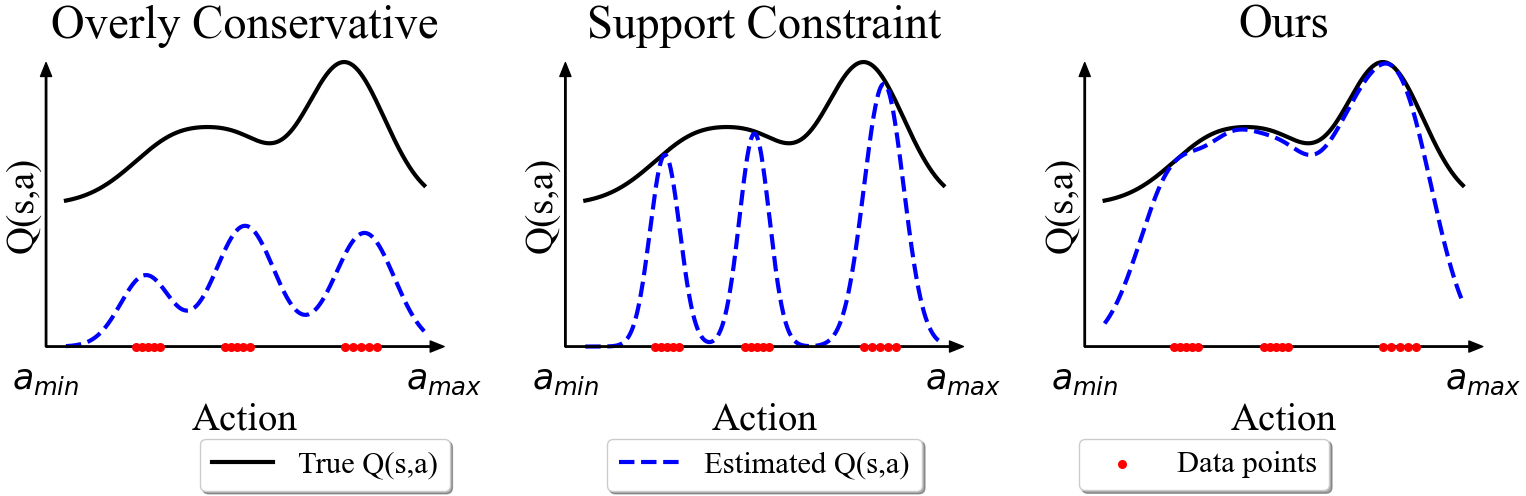}
  \caption{Comparison of true Q-values and estimated Q-values across different approaches. One example overly conservative method is CQL\cite{cql}, and IQL\cite{iql} is one instance of support constraint. Unlike these methods, our method leverages NN interpolation ability and guarantees point-wise lower bounds of true Q-values.}
  \label{fig:sql-image}
\end{figure}

%%%%%%%%%%%%%%%%%%%%%%%%%%%%%%%%%%%%%%%%%%%%%%%%%%%%%%%%%%%%%%%%%%%%%%
%%%%%%%%%%%%%%%%%%%%%%%%%%%%%%%%%%%%%%%%%%%%%%%%%%%%%%%%%%%%%%%%%%%%%%
%%%%%%%%%%%%%%%%%%%%%%%%%%%%%%%%%%%%%%%%%%%%%%%%%%%%%%%%%%%%%%%%%%%%%%
% Related Work
\section{Related Work}
There are mainly two methods for finding the optimal policy from a static dataset: imitation learning and offline reinforcement learning. Although imitation learning\cite{daggar, dart, Atkeson1997RobotLearning,  Mozer1996Learning, Pastor2009Learning} does not have to design complex reward functions and handle OOD extrapolation error problems, it is hard to obtain a policy that surpasses the behavior policy.

Offline reinforcement learning, on the other hand, can get an optimal policy from a noisy complex dataset, but it needs to address out-of-distribution problems. One way is to constrain the learned police to the behavior policy that is used to collect the dataset\cite{Le2019BatchPL, Kumar2019Stabilizing, doge}. While some methods\cite{Fujimoto2018OffPolicyDR, matsushima2021deploymentefficient} meticulously model and constrain the learned policy, others use explicit divergence constraints\cite{Kumar2019Stabilizing, wu2020behavior, td3-bc, Zhang2021BRAC, Dadashi2021Offline} or implicit divergence constraints\cite{peng2021advantageweighted, awac, xu2022apolicyguided} to keep the learned policy close to the behavior policy. Value constraint methods, in contrast, penalize state-action value functions in out-of-distribution regions to extract optimal policies\cite{cql, Ilya2021Fisher, uwac, combo, mcq, lee2021offlinetoonline, rnd-sac, calql}. 

MCQ\cite{mcq} explicitly separates OOD actions from IDD actions and minimizes state values in OOD regions. DOGE\cite{doge} learns a distance function to determine the OOD actions and penalizes the OOD actions when it updates the learned policy. ReBrac\cite{rebrac} extends the TD3-BC\cite{td3-bc} by adding layer normalization and tuning the parameters. While these methods minimize state-value functions outside of the support of the in-data distribution, their estimated values still likely become overly conservative. A different approach is to learn Q functions without querying data from OOD regions\cite{iql, wang2018Advances, chen2020bail}. Although these methods successfully avoid dealing with OOD data, the resulting policy tends to converge on a sub-optimal level due to excessively pessimistic state-action value estimations.

%%%%%%%%%%%%%%%%%%%%%%%%%%%%%%%%%%%%%%%%%%%%%%%%%%%%%%%%%%%%%%%%%%%%%%
%%%%%%%%%%%%%%%%%%%%%%%%%%%%%%%%%%%%%%%%%%%%%%%%%%%%%%%%%%%%%%%%%%%%%%
%%%%%%%%%%%%%%%%%%%%%%%%%%%%%%%%%%%%%%%%%%%%%%%%%%%%%%%%%%%%%%%%%%%%%%
% Preliminaries
\section{Preliminaries} \label{sec:preliminaries}
We first give a brief introduction to reinforcement learning and associated notations. After that, we discuss the limitations of existing offline reinforcement learning methods. 

\subsection{Reinforcement Learning} \label{subsec:rl-preliminaries}

Reinforcement Learning (RL) aims to optimize agents that interact with a Markov Decision Process (MDP) defined by a tuple $(\gS, \gA, P, r, \mu_1, \gamma)$, where $\gS$ represents the set of all possible states, $\gA$ is the set of possible actions,  $\mu_1$ is the initial state distribution, and $\gamma$ is the discount factor. When action $a \in \gA$ is executed at state $s \in \gS$, the next state is generated according to $s' \sim P(\cdot | s, a)$, and the agent receives stochastic reward with mean $r(s, a) \in \mathbb{R}$.

The Q-function $Q^\pi(s, a)$ for a policy $\pi(\cdot | s)$ represents the discounted long-term reward attained by executing $a$ given observation history $s$ and then following policy $\pi$ thereafter. 
$Q^\pi$ satisfies the Bellman recurrence: 
$Q^\pi(s, a) = \mathbb{B}^\pi Q^\pi(s, a) = r(s, a) + \gamma \E{s' \sim P(\cdot|s, a), a' \sim \pi(\cdot|s')}{Q_{h+1}(s', a')}\,.
$
The value function $V^\pi$ considers the expectation of the Q-function over the policy $V^\pi(h) = \E{a \sim \pi(\cdot | s)}{Q^\pi(s, a)}$. Meanwhile, the Q-function of the optimal policy $Q^*$ satisfies: $Q^*(s, a) = r(s, a) + \gamma \E{s' \sim P(\cdot|s, a)}{\max_{a'} Q^*(s', a')}$, and the optimal value function is $V^*(s) = \max_{a} Q^*(s, a)$. Finally, the expected cumulative reward is given by $J(\pi) = \E{s_1 \sim \mu_1}{V^\pi(s_1)}$.
The goal of RL is to optimize a policy $\pi(\cdot \mid s)$ that maximizes the cumulative reward 
$
J(\pi) = \E{\mu_1}{V^\pi(s_1)}
$.
% Reinforcement Learning (RL) problems can be defined as a Markov Decision Process (MDP) with a tuple $(S, A, p_0, p, \gamma)$, where $S$ represents the set of all possible states, $A$ is the set of possible actions, $r(s,a) : S \times A \rightarrow \mathbb{R}$ denotes the reward function, $p_0(s)$ is an initial state probability distribution, $p(s'|s,a) : S \times A \times S \rightarrow [0,1]$ represents an environment transition function, and $\gamma \in [0,1]$ is a discount factor. The goal of RL is to discover a policy $\pi(\cdot|s)$ that maximizes the expected sum of discounted rewards $J(\pi) = \mathbb{E}_{s_0 \sim p_0(\cdot)}\left[\sum_{t=0}^{\infty} \gamma^t r(s_t, \pi(s_t))\right]$. 
% We define the state value function $V(s)$ and state-action value function $Q(s,a)$ as:
% $
%         V(s) = \mathbb{E}_{\pi} \left[ \sum_{t=0}^{\infty} \gamma^t r_{t+1} \mid s_0 = s \right] 
% $
% ,
% $
%         Q(s, a) = \mathbb{E}_{\pi} \left[ \sum_{t=0}^{\infty} \gamma^t r_{t+1} \mid s_0 = s, a_0 = a \right]
% $

In offline RL, we are provided with a dataset $\mathcal{D} = \{(s_i, a_i, r_i, s'_{i})\}_{i=1}^N$ of size $|\mathcal{D}| = N$.  We assume that the dataset $\mathcal{D}$ is generated i.i.d. from an effective behavior policy $\pi_\beta(a|s)$. 
The absence of online interactions in offline RL impedes the acquisition of new data, leading to distributional shift.

\subsection{Actor-Critic Methods}
Actor-critic methods are a common way to perform value-based RL. Algorithms in this family consists of two steps to learn optimal policies: policy evaluation and policy improvement step. 
The policy evaluation step computes $Q^\pi(s,a)$ for policy $\pi$ by fitting it to its Bellman backup target value:
\begin{equation}
Q^\pi(s, a) = B^\pi Q^\pi(s,a) := r(s,a) + \gamma \mathbb{E}_{s' \sim P(\cdot|s,a), a' \sim \pi(\cdot|s')}\left[Q^\pi(s',a')\right].
\end{equation}
Subsequently, the policy is updated in the policy improvement step:
\begin{equation} \label{eq:actor-update}
    \pi = \underset{\pi}{\operatorname{argmax}} \ \mathbb{E}_{s\sim \mathcal{D}, a \sim \pi(\cdot |s)} \left[ Q^\pi(s,a) \right]
\end{equation}
Given that true dynamics and reward of the MDP are unknown, we use the empirical Bellman operator $\hat{B}^{\pi}$, which utilizes a single sample from dataset $\mathcal{D}$ to compute the target value. In the presence of function approximation, we model $Q^\pi, \pi$ as $Q_{\theta}, \pi_{\phi}$, with $\theta$ and $\phi$ as their parameters. The policy evaluation and improvement steps at iteration $k$ can be expressed as:
\begin{equation} \label{eq:ct-bellman-update}
\begin{split}
    \theta_{k+1} \leftarrow &\underset{\theta}{\operatorname{argmin}}\; \mathbb{E}_{(s, a, r, s') \sim \mathcal{D}} \left[ \left(Q_{\theta}(s,a) - r(s,a) - \gamma \mathbb{E}_{a'\sim \pi_{\phi_k}(a'|s)}[Q_{\bar{\theta}_k}(s,a)] \right)^2 \right] \\
    \phi_{k+1} \leftarrow  &\underset{\phi}{\operatorname{argmax}} \; \mathbb{E}_{s\sim \mathcal{D}, a\sim\pi_{\phi}(\cdot |s)} \left[ Q_{\theta_{k+1}}(s,a) \right]\,,
\end{split}
\end{equation}
where $\bar{\theta}_k$ are target parameters that are a slow-moving copy of $\theta_k$.

\subsection{Conservative Q-learning}
Conservative Q-learning (CQL)\cite{cql} is an actor-critic algorithm that aims to constrain values in out-of-distribution areas to avoid overestimation. CQL updates modifies the policy evaluation step in actor-critic algorithms to instead solving the following optimization problem:
\begin{equation} \label{eq:cql-q-update}
\begin{split}
    \theta^{k+1} &\leftarrow \underset{\theta}{\operatorname{argmin}} \; \alpha  \left( \mathbb{E}_{s, a \sim D,a \sim \pi_{\phi_k}(a|s)} \left[ Q_{\theta}(s,a) \right] - \mathbb{E}_{s, a \sim D} \left[ Q_{\theta}(s,a) \right] \right) \\
&\qquad\qquad + \frac{1}{2} \mathbb{E}_{(s, a, r, s') \sim \mathcal{D}} \left[ \left(Q_{\theta}(s,a) - r(s,a) - \gamma \mathbb{E}_{a'\sim \pi_{\phi_k}(a'|s)}[Q_{\bar{\theta}_k}(s',a')] \right)^2 \right]
\end{split}
\end{equation}
where $\alpha$ is a hyperparameter to change the conservativeness. Under CQL, the estimated Q-values are guaranteed to be pessimistic, but often too much so, leading to a overly conservative policy. This conservativeness comes from the over-penalization of Q-values inside and outside of the dataset.

%%%%%%%%%%%%%%%%%%%%%%%%%%%%%%%%%%%%%%%%%%%%%%%%%%%%%%%%%%%%%%%%%%%%%%
%%%%%%%%%%%%%%%%%%%%%%%%%%%%%%%%%%%%%%%%%%%%%%%%%%%%%%%%%%%%%%%%%%%%%%
%%%%%%%%%%%%%%%%%%%%%%%%%%%%%%%%%%%%%%%%%%%%%%%%%%%%%%%%%%%%%%%%%%%%%%
% Main Discussion
\section{Strategically Conservative Q Learning} \label{sec:scq-main}
In this section, we begin by detailing how Q-values are learned in our algorithm, dubbed Strategically Conservative Q-Learning (SCQ). We then theoretically demonstrate several advantageous properties of SCQ. 

\subsection{Q-learning with Strategic Conservatism}
Our primary goal is to minimize Q-values over OOD actions, while also leveraging the interpolation capability of neural networks as Q-functions. 
Intuitively, we want to decrease the Q-values of OOD actions that are clearly too far from the dataset to be estimated property, but do not want to penalize Q-values for actions when they can likely be interpolated by the neural network from actions within the dataset. We define the OOD actions in the following way:

\begin{dfn} [Out-of-distribution actions] \label{def:ood-actions}
    The out-of-distribution (OOD) actions for a state $s$ are defined as
    \begin{equation} \label{eq:ood-action-definition}
        A_{\text{ood}}(s) = \left\{ a \mid \left|\left| a - \text{Proj}_{\mathcal{D}}(a \mid s) \right|\right|_2 \geq \delta \right\}
    \end{equation}
    where $\delta$ is a threshold to choose the distance threshold for OOD actions, and we define the projected action to the dataset as $\text{Proj}_{\mathcal{D}}(a \mid s) := \arg \min_{(s, a') \in \mathcal{D}} \| a - a' \|$ the action that was taken at state $s$ that is closest to $a$ in the dataset.
\end{dfn}

Fix policy $\pi$, and let $\hat{Q}$ be an approximation of the Q-function under the policy $Q^\pi$. 
We define a new policy $\pi_{\text{ood}}$ that depends on policy $\pi$ in the following manner: $\pi_{\text{ood}}(a | s) \propto \pi(a | s) \mathbbm{1}[a \in A_{\text{ood}}(s)]$. Hence, $\pi_{\text{ood}}$ can be interpreted as the policy $\pi$ but with support constrained entirely to OOD actions.
With the policy $\pi_{\text{ood}}$ defined as above, we consider the following objective to solve for the estimated Q-values:
\begin{equation} \label{eq:scq-q-update}
\hat{Q} = \underset{Q}{\operatorname{argmin}} \; \alpha \, \mathbb{E}_{s \sim \mathcal{D}, a \sim \pi_{\text{ood}}(\cdot | s)} \left[ Q(s, a) \right] + \frac{1}{2} \mathbb{E}_{(s, a, r, s') \sim \mathcal{D}} \left[ \left( Q(s, a) - \hat{B}^{\pi} {\hat{Q}}(s, a) \right)^2 \right]
\end{equation}
where $\alpha$ is a hyperparameter used to adjust the conservativeness of the Q-functions. The first term minimizes the Q-values for out-of-distribution actions, while the second term represents the standard Bellman update. Intuitively, SCQ minimizes Q-values specifically for actions that are far from the dataset and uses interpolated values for points within and near the dataset. This property distinguishes SCQ from previous works, which minimize Q-values even in regions that are inside and close to the dataset. This can be particularly detrimental when the Q-function is approximated using a high-capacity network capable of generalization via interpolation such as a deep neural network.
We show a practical algorithm that optimizes this objective in Section~\ref{sec:practical}.

\subsection{Theoretical Analysis}

The goal of our analysis is to show that our approach SCQ enjoys the benefits of pessimism, while also not suffering from overly conservative behavior. We illustrate this by showing point-wise pessimism in the Q-function, but tighter pessimism, or not as strong of a lower-bound, compared to CQL.
%We show this in sample complexity bounds that improve on state-of-the-art approaches.

Because our approach benefits from the interpolation of value function at actions near the data distribution, we must consider MDPs with continuous action spaces rather than a naive tabular one. For our analysis, we consider a linear MDP as defined in \citet{pevi}. However, note that our analysis can be extended to the general case of function approximation using analysis tools such as the neural tangent kernel (NTK) \citep{ntk}. 

\begin{dfn}
 A MDP $(\mathcal{S}, \mathcal{A}, P, r, p_0, \gamma)$ is a linear MDP if there exists known feature map $\phi: \mathcal{S} \times \mathcal{A} \to \mathbb{R}^d$ and $d$ unknown measures $\mu_h = (\mu_h^{(1)}, \mu_h^{(2)}, \hdots, \mu_h^{(d)})$ over $\mathcal{S}$ and vector $\theta \in \mathbb{R}^d$ such that
 \begin{align*}
     P(s' \mid s, a) = \langle \phi(s, a), \mu(s') \rangle\,, \quad r(s, a) = \langle \phi(s, a), \theta \rangle\,.
 \end{align*}
 for all $(s, a, s') \in \mathcal{S} \times \mathcal{A} \times \mathcal{S}$. Without loss of generality, we assume $||\phi(s,a)|| \leq 1$ for all $(s, a) \in \mathcal{S} \times \mathcal{A}$ and and $\max \{\|\mu(S)\|, \|\theta\|\} \leq \sqrt{d}$. Note that we define $\|\mu_h(S)\| = \int_S \|\mu_h(x)\| \, dx$.

\end{dfn}

Ultimately, we aim to show that while SCQ benefits from the theoretical guarantees due to pessimism just like CQL, SCQ produces tighter bounds.
Due to space, we simply state the results and deter proofs to Appendix~\ref{sec:appendix-proofs}.
Note that in the linear setting, learning at every iteration $k \in \mathbb{N}$ involves optimizing for weights $\hat{w}^k$ such that the Q-values and derived policy become 
\begin{align*}
    & \hat{Q}^k(s, a) = \langle \phi(s, a), \hat{w}^k \rangle\,, \quad 
    \hat{\pi}^k(\cdot \mid s) = \argmax_\pi \langle \hat{Q}^k(s, \cdot), \pi(\cdot \mid s) \rangle\,, 
\end{align*}
 %    & \hat{w}_h = \argmin_w  \sum_{i = 1}^N \left(r_i + \langle \hat{\pi}_{h+1}(\cdot \mid s'_i), \hat{Q}_{h+1}(s'_i, \cdot) \rangle - \langle \phi(s_i, a_i), w \rangle\right)^2 + \\ 
 %    &\qquad \alpha \sum_{i=1}^N 
 %    \sum_{a \in \mathcal{A}} \langle (\mu(a \mid s_i) - \hat{\pi}_\beta(a \mid s_i))\phi(s_i, a), w \rangle\,
 % \end{align*}
 Let us first consider CQL. We can show that in CQL, the learned value function satisfies the following pessimism guarantee:

\begin{lem}
For every iteration $k$, let $\hat{Q}^k, \hat{\pi}^k$ be the Q-function and policy learned by CQL in a linear MDP. Also, let $\hat{V}^k(s) = \langle \hat{\pi}^k(\cdot \mid s), \hat{Q}^k(s, \cdot) \rangle$, and $V^k$ be the exact value function in the absence of sampling error. Then, there exists $\alpha$ in the CQL objective in \eqref{eq:cql-q-update} such that $\hat{V}^k$ satisfies
$
    \hat{V}^k(s) \leq V^k(s)
$
for any state $s \in \gS$.
\end{lem}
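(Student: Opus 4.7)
The plan is to derive a closed-form expression for the CQL weight update in the linear parameterization and then show that the penalty term perturbs the learned weights in a direction that yields a point-wise lower bound on the induced value function.

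First, I would solve the CQL update analytically. Writing $Q_\theta(s,a) = \langle \phi(s,a), w\rangle$, the objective in \eqref{eq:cql-q-update} becomes a strongly convex quadratic in $w$ under the usual linear-MDP coverage assumption on the data covariance. Setting its gradient to zero yields the normal equation
\begin{equation*}
\Lambda_k\, \hat w^{k+1} \;=\; b_k \;-\; \alpha\, \Delta_k\,,
\end{equation*}
where $\Lambda_k := \sum_{(s,a)\in\mathcal{D}} \phi(s,a)\phi(s,a)^\top$ is the empirical feature covariance, $b_k$ is the standard empirical Bellman target built from $\bar w^k$, and $\Delta_k := \mathbb{E}_{s\sim \mathcal{D},\,a\sim \pi_{\phi_k}(\cdot\mid s)}[\phi(s,a)] - \mathbb{E}_{(s,a)\sim \mathcal{D}}[\phi(s,a)]$ is the policy-versus-data feature gap. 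Let $w^{k+1} := \Lambda_k^{-1}b_k$ denote the unpenalized fitted-value weights and write $\bar\phi_\pi(s) := \mathbb{E}_{a\sim \pi(\cdot\mid s)}[\phi(s,a)]$. A direct computation then gives the compact identity
\begin{equation*}
\hat V^{k+1}(s) \;-\; V^{k+1}(s) \;=\; -\,\alpha\,\bar\phi_{\hat\pi^{k+1}}(s)^\top\, \Lambda_k^{-1}\,\Delta_k\,,
\end{equation*}
where $V^{k+1}(s) := \langle \bar\phi_{\hat\pi^{k+1}}(s), w^{k+1}\rangle$ is the value the greedy policy $\hat\pi^{k+1}$ would receive under the unpenalized, noiseless Bellman projection.

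Second, I would show that the right-hand side is nonpositive for every $s$. This is the linear-MDP analogue of the tabular $\chi^2$ argument of \cite{cql}: $\Delta_k$ encodes a relative feature shift from $\pi_\beta$ toward $\pi_{\phi_k}$, and at (or near) the soft policy-iteration fixed point $\hat\pi^{k+1}$ is concentrated on the same high-value actions as $\pi_{\phi_k}$, so the quadratic form $\bar\phi_{\hat\pi^{k+1}}(s)^\top\Lambda_k^{-1}\Delta_k$ can be identified with a matched inner product of policy features in the $\Lambda_k^{-1}$ norm and shown to be nonnegative via a Cauchy--Schwarz-type argument. To absorb any residual sampling error in $b_k$, one picks $\alpha$ strictly larger than the worst-case magnitude of $\bar\phi_{\hat\pi^{k+1}}(s)^\top \Lambda_k^{-1}(b_k - \mathbb{E}[b_k])$, which is controlled uniformly in $s$ by a standard $\mathcal{O}(\sqrt{d/N})$ concentration bound for linear regression in the spirit of \cite{pevi}.

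The main obstacle will be precisely the point-wise nonnegativity step. In the tabular case, the $\chi^2$ identity gives $\sum_a \pi(a\mid s)^2/\pi_\beta(a\mid s) - 1\geq 0$ term by term; in the linear case the features entangle different state--action pairs through $\Lambda_k^{-1}$, so one must leverage the linear-MDP coverage assumption on $\Lambda_k$ together with the definition of $\hat\pi^{k+1}$ as the greedy policy for the perturbed $\hat Q^{k+1}$ to ensure the covariance-weighted inner product has the correct sign for every $s$, not merely on average over the data distribution.
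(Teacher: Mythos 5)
Your setup is correct and is essentially the route the paper takes: the paper does not prove this lemma itself but defers to \cite{kumar2020conservative}, and the machinery it does spell out (in the appendix proofs of its own Theorems 4.4 and 4.5) is exactly your computation --- solve the linear normal equations to get $\hat w^{k+1}=\Lambda_k^{-1}(b_k-\alpha\Delta_k)$, and write the learned value as the unpenalized LSTD-Q value minus $\alpha$ times a penalty. Your quantity $\bar\phi_{\hat\pi}(s)^\top\Lambda_k^{-1}\Delta_k$ is precisely the state-$s$ block of what the paper calls $f(\pi)=\pi^\top P_\Phi\bigl[(\pi-\pi_\beta)/\pi_\beta\bigr]$ with $P_\Phi=\Phi(\Phi^\top D\Phi)^{-1}\Phi^\top D$, so the identity you derive is the right one.

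The gap is exactly the step you flag and then wave through: you never establish that this term is nonnegative \emph{at every state}, and the ``Cauchy--Schwarz-type argument'' you propose cannot do it, because Cauchy--Schwarz controls the magnitude of an inner product, not its sign. In the tabular case the term is $\sum_a \pi(a\mid s)^2/\pi_\beta(a\mid s)-1$, a $\chi^2$-divergence, hence nonnegative state by state; but $P_\Phi$ is an oblique ($D$-weighted) projection that mixes state--action pairs, and the projection of the entrywise-nonnegative $\chi^2$ vector can be negative at individual states. So point-wise nonnegativity must either be assumed (which is effectively what the paper does --- its Proposition A.1 asserts $f(\pi)\ge 0$ ``from the proof of CQL'', and CQL's Theorem D.1 builds the requirement into the admissible range of $\alpha$), or the conclusion must be weakened to hold in expectation over the data distribution of states, where the $D$-self-adjointness of $P_\Phi$ turns the relevant quantity into a genuine positive-semidefinite quadratic form (plus a cross term that vanishes when the constant function lies in the span of the features). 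A secondary mismatch worth fixing: your value identity pairs the features of the greedy policy $\hat\pi^{k+1}$ with a penalty $\Delta_k$ built from $\pi_{\phi_k}$, whereas the $\chi^2$-type sign argument needs the same policy in both slots; you should evaluate $\hat V^{k+1}$ under the policy used in the penalty (as CQL does) or supply a separate argument for the greedy policy.
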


The proof of this statement can be found in \citet{kumar2020conservative} and arises from solving for $\hat{w}^k$ that minimizes the CQL objective. Our main result is twofold. First, we can show that our approach SCQ satisfies point-wise pessimism with the following assumption.
\begin{ass} \label{ass:epsilon-bound}
For every state $s$, and action that not from the OOD actions set $\left(a \notin A_{\text{ood}}(s) \right)$, the sampling error between empirical $\hat{Q}$ and true $Q$ is bounded by $\varepsilon$.
\begin{equation}
    | Q(s,a) - \hat{Q}(s,a) | \leq \varepsilon 
\end{equation}
\end{ass}

\begin{thm} \label{thm:scq-pointwise}
For every iteration $k$, let $\hat{Q}^k$ be the Q-function learned by SCQ in a linear MDP and $Q^k$ be the true function without sampling error.  Then, there exists $\alpha$ in the SCQ objective in \eqref{eq:scq-q-update} such that $\hat{Q}^k$ satisfies
$
    \hat{Q}^k(s, a) \leq Q^k(s, a) + \varepsilon
$
for any state $s \in \gS$ and action $a \in \gA$.
\end{thm}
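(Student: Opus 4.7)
The plan is to work in closed form by exploiting the linear parametrization $Q(s,a) = \langle \phi(s,a), w\rangle$. First, I would take the gradient of the SCQ objective \eqref{eq:scq-q-update} with respect to $w$ and set it to zero, yielding
\[
\hat w^{k+1} = \hat \Lambda^{-1} \Bigl( \mathbb{E}_{(s,a,r,s')\sim\mathcal{D}}[\phi(s,a)\,\hat B^{\pi}\hat Q^{k}(s,a)] - \alpha\, \mathbb{E}_{s\sim\mathcal{D},\, a\sim\pi_{\text{ood}}(\cdot|s)}[\phi(s,a)]\Bigr),
\]
where $\hat \Lambda=\mathbb{E}_{(s,a)\sim\mathcal{D}}[\phi(s,a)\phi(s,a)^{\top}]$, and contrast it with the update $w^{k+1}=\Lambda^{-1}\mathbb{E}[\phi\, B^{\pi}Q^{k}]$ obtained in the absence of sampling error and penalty. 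The pointwise gap $\hat Q^{k+1}(s,a)-Q^{k+1}(s,a)=\phi(s,a)^{\top}(\hat w^{k+1}-w^{k+1})$ then decomposes into (i) a Bellman backup propagation/sampling piece and (ii) a penalty piece $-\alpha\,\phi(s,a)^{\top}\hat \Lambda^{-1}\mathbb{E}_{\pi_{\text{ood}}}[\phi(s',a')]$. This mirrors Kumar et al.'s linear-MDP analysis of CQL, except the penalty is supported only on $A_{\text{ood}}$.

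Next, I would run an induction on $k$ with hypothesis $\hat Q^{k}(s,a)\le Q^{k}(s,a)+\varepsilon$ for every $(s,a)$, splitting the inductive step by whether the query action lies in $A_{\text{ood}}(s)$. For $a\notin A_{\text{ood}}(s)$, the conclusion is immediate from Assumption \ref{ass:epsilon-bound} applied to $\hat Q^{k+1}$, which already pins down $|\hat Q^{k+1}(s,a)-Q^{k+1}(s,a)|\le\varepsilon$ on the in-distribution support. For $a\in A_{\text{ood}}(s)$, the propagation piece can be bounded using the inductive hypothesis to show that $\hat B^{\pi}\hat Q^{k}$ differs from $B^{\pi}Q^{k}$ by at most $\gamma\varepsilon$ in the features appearing under $\mathcal{D}$, while the penalty piece subtracts $\alpha$ times a quantity that I would argue is non-negative for OOD queries. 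Choosing $\alpha$ at least as large as the implicit constant multiplying $\gamma\varepsilon$ then yields $\hat Q^{k+1}(s,a)\le Q^{k+1}(s,a)+\varepsilon$ on OOD actions as well, closing the induction.

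The main obstacle I anticipate is verifying non-negativity (and a sufficient lower bound) of the penalty piece $\phi(s,a)^{\top}\hat \Lambda^{-1}\mathbb{E}_{s'\sim\mathcal{D},\,a'\sim\pi_{\text{ood}}(\cdot|s')}[\phi(s',a')]$ at an arbitrary OOD query $(s,a)$. This inner product is not automatically positive for a general feature map, so the argument will have to exploit the linear-MDP norm convention $\|\phi\|\le 1$ together with some structural alignment between the query feature $\phi(s,a)$ and the average OOD feature produced by $\pi_{\text{ood}}$, for example by invoking coverage of $\hat\Lambda$ (a lower bound on its smallest eigenvalue on the relevant subspace) together with a Cauchy--Schwarz-style control of the sign of $\phi(s,a)^{\top}\mathbb{E}_{\pi_{\text{ood}}}[\phi]$. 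Once this feature-alignment lemma is in place, the remainder is a routine selection of $\alpha$ and a geometric-series bookkeeping exploiting $\gamma<1$; the bottleneck is really establishing this OOD penalty positivity in the linear-MDP geometry.
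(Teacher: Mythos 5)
Your plan is essentially the paper's own proof: solve the linear least-squares problem in closed form, write the result as the LSTD-Q solution minus $\alpha$ times a penalty vector $\Phi(\Phi^{\top}D\Phi)^{-1}\Phi^{\top}D\,\bigl[\pi_{\text{ood}}/\pi_{\beta}\bigr]$, split by whether the query action is in $A_{\text{ood}}(s)$, invoke Assumption \ref{ass:epsilon-bound} verbatim for the in-distribution case, and choose $\alpha$ large enough to absorb the LSTD-Q overestimation in the OOD case. Two minor differences in bookkeeping: the paper does not run an induction with a $\gamma\varepsilon$ contraction --- it folds the entire gap $\hat{Q}^{k+1}_{\text{LSTD-Q}}-Q$ directly into the lower bound on $\alpha_k$ (equation \ref{eq:alpha-condition-pointwise}), obtaining $\hat{Q}^{k+1}(s,a)\le Q(s,a)$ with no $\varepsilon$ slack on OOD actions; and it treats each iteration independently rather than propagating a hypothesis across $k$. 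Your version is arguably cleaner about where the error comes from, but both routes land in the same place.

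The obstacle you single out --- non-negativity of $\phi(s,a)^{\top}(\Phi^{\top}D\Phi)^{-1}\Phi^{\top}D\,\mathbb{E}_{\pi_{\text{ood}}}[\phi]$ at an arbitrary OOD query --- is exactly the step the paper does \emph{not} justify: it simply asserts $\Phi(\Phi^{\top}D\Phi)^{-1}\Phi^{\top}D\,\frac{\pi_{\text{ood}}(a|s)}{\pi_{\beta}(a|s)}\mathbf{1}\ge 0$, even though an oblique projection of a non-negative vector need not be entrywise non-negative for a general feature map. So you have not missed anything the paper actually supplies; you have correctly identified that the argument needs a feature-alignment or coverage condition there, and any honest completion of either proof would have to add one (or restrict to features, e.g.\ non-negative one-hot-like features, for which the projection preserves sign).
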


Second, while our approach satisfies learning lower bounds of the true value, the bounds are tighter than those learned by CQL due to our strategic conservatism. Namely, the value function learned by SCQ satisfies:

\begin{thm} \label{thm:scq-state-value}
For every iteration $k$, again let $\hat{V}^k(s)$ be the learned value function by SCQ, and $V^k$ be the exact one in the absence of sampling error. 
Similarly, let $\tilde{V}^k(s)$ be the value function learned by CQL.
If $\pi^k_{\text{ood}}$ satisfies $\pi^k_{\text{ood}}(\cdot | s) \neq \mathbf{0}$ for all states $s \in S$,
then, for any $\tilde{\alpha}$ in the CQL objective in \eqref{eq:cql-q-update}, there exists $\alpha$ in the SCQ objective in \eqref{eq:scq-q-update} such that $\hat{V}^k$ satisfies
$
    \tilde{V}^k(s) \leq 
    \hat{V}^k(s) \leq 
    V^k(s)
$
for any state $s \in \gS$.
\end{thm}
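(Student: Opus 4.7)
The plan is to exploit the linearity of the MDP to write closed-form minimizers of both losses and compare the resulting value functions as linear perturbations of the ordinary, non-penalized regressor. Taking the gradient of the quadratic SCQ loss in \eqref{eq:scq-q-update} with respect to $w$ and setting it to zero, I obtain
\begin{equation*}
\hat{w}^k \;=\; w_0^k \;-\; \alpha\, \Lambda^{-1} \mu_{\text{ood}}^k,
\qquad
\Lambda := \mathbb{E}_{(s,a)\sim \gD}[\phi(s,a)\phi(s,a)^\top],
\end{equation*}
where $w_0^k$ is the ordinary Bellman-target least-squares solution and $\mu_{\text{ood}}^k := \mathbb{E}_{s\sim \gD,\, a\sim \pi_{\text{ood}}^k}[\phi(s,a)]$. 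An analogous computation (mirroring the derivation in \cite{kumar2020conservative}) yields
$\tilde{w}^k = w_0^k - \tilde{\alpha}\, \Lambda^{-1}(\mu_{\pi}^k - \mu_{\gD}^k)$ for CQL, with $\mu_\pi^k$ and $\mu_\gD^k$ denoting feature means under $\pi^k$ and $\gD$.

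Next, I would convert these into state-value perturbations. Writing $V^k(s) = \langle \hat{\pi}^k(\cdot\mid s), \langle \phi(s,\cdot), w_0^k\rangle\rangle$ for the noise-free reference, both learned values become
\begin{equation*}
\hat{V}^k(s) \;=\; V^k(s) \;-\; \alpha\, P_{\text{SCQ}}(s),
\qquad
\tilde{V}^k(s) \;=\; V^k(s) \;-\; \tilde{\alpha}\, P_{\text{CQL}}(s),
\end{equation*}
with $P_{\text{SCQ}}(s) = \langle \hat{\pi}^k(\cdot\mid s), \phi(s,\cdot)^\top \Lambda^{-1} \mu_{\text{ood}}^k\rangle$ and an analogous formula for $P_{\text{CQL}}(s)$. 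Showing the upper bound $\hat{V}^k(s) \leq V^k(s)$ then reduces to $P_{\text{SCQ}}(s)\geq 0$; showing the lower bound $\tilde{V}^k(s)\leq \hat{V}^k(s)$ reduces to $\alpha P_{\text{SCQ}}(s) \leq \tilde{\alpha} P_{\text{CQL}}(s)$.

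For the upper bound, I would argue that $P_{\text{SCQ}}(s)\geq 0$ mirroring the CQL pessimism lemma used to establish Lemma~4.2: the greedy policy $\hat{\pi}^k$ induced by the penalized $\hat{Q}^k$ implicitly reweights actions away from the OOD direction $\Lambda^{-1}\mu_{\text{ood}}^k$, so the cross term is non-negative in expectation. The assumption $\pi_{\text{ood}}^k(\cdot\mid s)\neq \mathbf{0}$ is what guarantees the perturbation is actually in the OOD direction, otherwise the penalty trivially vanishes. For the lower bound, observe that $P_{\text{CQL}}(s)>0$ under CQL's standard lower-bound argument, while $P_{\text{SCQ}}(s)$ scales linearly in $\alpha$. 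It therefore suffices to choose
$\alpha \leq \tilde{\alpha}\, \inf_{s\in\gS} P_{\text{CQL}}(s)/P_{\text{SCQ}}(s)$
(with an appropriate limit when the denominator vanishes), whose positivity is ensured by the assumption on $\pi_{\text{ood}}^k$ and by compactness of $\gS$.

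The main obstacle is verifying non-negativity of $P_{\text{SCQ}}(s)$ pointwise, because unlike CQL---where the penalty direction $\mu_\pi-\mu_\gD$ is coupled to the policy being evaluated via a clean importance-sampling identity---SCQ's direction $\mu_{\text{ood}}^k$ is not obviously aligned with $\hat{\pi}^k$. I expect to handle this by a KKT-style argument: $\hat{\pi}^k$ is the $\argmax$ of a Q-function that has been pushed down precisely along $\mu_{\text{ood}}^k$, so any feature mass the greedy policy places in that direction is self-defeating, forcing $\langle \hat{\pi}^k, \phi^\top\Lambda^{-1}\mu_{\text{ood}}^k\rangle \geq 0$. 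Converting this intuition into a clean inequality valid uniformly in $s$ (to match the ``for any state $s\in\gS$'' conclusion) is the step that will demand the most care.
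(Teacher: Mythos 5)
Your skeleton matches the paper's: both derive the closed-form linear solution, write the learned value as the unpenalized LSTD-Q value minus a penalty ($\hat{V}^k = \hat{V}^k_{\text{LSTD-Q}} - \alpha\, P_{\text{SCQ}}$, $\tilde{V}^k = \hat{V}^k_{\text{LSTD-Q}} - \tilde{\alpha}\, P_{\text{CQL}}$), and obtain the ordering by choosing $\alpha = \tau\tilde{\alpha}$ so that $\alpha P_{\text{SCQ}} \le \tilde{\alpha} P_{\text{CQL}}$. Your $\Lambda^{-1}\mu^k_{\text{ood}}$ is exactly the paper's $(\Phi^\top D\Phi)^{-1}\Phi^\top D\,[\pi_{\text{ood}}/\pi_\beta]$, and your infimum-ratio choice of $\alpha$ is the paper's Proposition A.1 in disguise (the paper instead decomposes $f(\pi) - \tau f_{\text{ood}}(\pi) = f_{\text{idd}}(\pi) + (1-\tau)f_{\text{ood}}(\pi)$ and does a two-case sign analysis on $f_{\text{idd}}$, which avoids needing a uniform infimum over $\gS$).

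The genuine gap is exactly where you flagged it, but your proposed fix goes in the wrong direction. The paper does \emph{not} establish $P_{\text{SCQ}}(s)\ge 0$ via any property of the greedy policy $\hat{\pi}^k$. A KKT/argmax argument cannot work here: the fact that $\hat{\pi}^k$ maximizes a Q-function that has been pushed down along $\Lambda^{-1}\mu^k_{\text{ood}}$ means the greedy policy \emph{avoids} that direction, which if anything drives $\langle\hat{\pi}^k(\cdot\mid s),\,\phi(s,\cdot)^\top\Lambda^{-1}\mu^k_{\text{ood}}\rangle$ toward zero or negative values; it gives no lower bound of the required sign. What the paper actually uses is much more direct (if itself somewhat under-justified): the penalty is an average of the vector $\pi_{\text{ood}}(a\mid s)/\pi_\beta(a\mid s)\ge 0$ through the matrix $P_\Phi = \Phi(\Phi^\top D\Phi)^{-1}\Phi^\top D$, which the paper asserts is entrywise positive, so $f_{\text{ood}}(\pi)>0$ follows from non-negativity of the weights together with the hypothesis $\pi^k_{\text{ood}}(\cdot\mid s)\neq\mathbf{0}$ (that hypothesis is there to make the penalty strictly positive, not to "align the perturbation with the OOD direction" as you suggest). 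The same mechanism, via CQL's Theorem D.1, gives $f(\pi)\ge 0$ for the CQL penalty. Two smaller discrepancies: (i) you identify the unpenalized regressor's value with $V^k$, whereas the paper keeps $\hat{V}^k_{\text{LSTD-Q}}\neq V^k$ and additionally requires $\alpha_{k,\text{scq}}\ge \max\bigl((\hat{V}^{k}_{\text{LSTD-Q}}(s)-V^{k}(s))/f_{\text{ood}}(\pi),\,0\bigr)$ so the penalty offsets interpolation overestimation — note this is a \emph{lower} bound on $\alpha$ that must coexist with the upper bound $\alpha=\tau\tilde{\alpha}$; (ii) your uniform-infimum choice of $\alpha$ needs $\inf_s P_{\text{CQL}}(s)/P_{\text{SCQ}}(s)>0$, which compactness alone does not give without continuity and strict positivity of $P_{\text{CQL}}$ — the paper's per-case decomposition sidesteps this.
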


This means that our learned Q-values enjoy the provably efficient theoretical guarantees from being pessimistic estimations ~\citep{jin2020pessimism}. In addition, because our learned Q-values are tighter pessimistic estimates than those of CQL, our method also enjoys better sample efficiency.

%%%%%%%%%%%%%%%%%%%%%%%%%%%%%%%%%%%%%%%%%%%%%%%%%%%%%%%%%%%%%%%%%%%%%%%%%%
%%%%%%%%%%%%%%%%%%%%%%%%%%%%%%%%%%%%%%%%%%%%%%%%%%%%%%%%%%%%%%%%%%%%%%%%%%
%%%%%%%%%%%%%%%%%%%%%%%%%%%%%%%%%%%%%%%%%%%%%%%%%%%%%%%%%%%%%%%%%%%%%%%%%%
%% Practical Implementaion
\section{Practical Implementation}
\label{sec:practical}
In this section, we show how to optimize our novel objective in a practical algorithm. 
Recall that we assume that we are modeling the estimated value function and policy with deep neural network. Here, we use the same notation defined in Section\ref{subsec:rl-preliminaries}. The overall algorithm flow is described in Algorithm.\ref{alg:scq}.

\subsection{Approximately Identifying OOD Actions}
First, we show how to efficiently identify what actions are OOD actions for any state $s$. Note that our definition in Definition~\ref{def:ood-actions} is impractical to implement as it assumes iterating over the entire dataset.
Following previous approaches \cite{mcq}, we employ a Conditional Variational Autoencoder (CVAE) \cite{cvae} to classify OOD actions from the sampled data. The CVAE is trained on the entire dataset to model the behavior policy $\pi_{\beta}$. Specifically, the CVAE takes the state $s$ and action $a$ as inputs and outputs a reconstructed action $a_{\text{cvae}} = f_{\text{cvae}}(s,a)$.

%The CVAE is composed of two main components: an encoder $E_{\psi}$ and a decoder $D_{\zeta}$. The parameters $\psi$ and $\zeta$ of both modules are trained to ensure that the input action and the reconstructed action are as close as possible. Formally, let $I$ denote the identity matrix and $\text{KL}(p, q)$ denote the Kullback-Leibler divergence between probability distributions $p$ and $q$. We train the CVAE by minimizing the following objective function:

%\begin{equation}
% \underset{\psi, \zeta}{\operatorname{min}} \; \mathbb{E}_{(s,a) \sim \mathcal{D}, z \sim E_{\psi}(s,a)} \left[ (a - D_{\zeta}(s, z))^2 \right] + \text{KL}(E_{\psi}(s,a), \mathcal{N}(0, \mathbf{I}))
%\end{equation}

%The first term in the objective function measures the reconstruction error, ensuring that the reconstructed action $a_{\text{cvae}}$ is close to the original action $a$. The second term regularizes the latent space by minimizing the KL divergence between the encoder’s output distribution and a standard normal distribution $\mathcal{N}(0, \mathbf{I})$, promoting a structured and continuous latent space.

When training our method SCQ, we use the learned policy to sample actions at a given state $s$. These sampled actions are then fed into the CVAE to generate reconstructed actions. 
By measuring the distance between the reconstructed actions and the sampled actions, we can infer whether the actions are OOD. 
Actions with a large reconstruction error are considered to be OOD, as they are not well-represented by the behavior policy modeled by the CVAE. Therefore, we approximate the set of OOD actions in the following way:
\begin{dfn} [Approximated out-of-distribution actions] \label{def:approximated-ood-actions}
    We define the approximate OOD action set $\hat{A}_{ood}(s)$ at a state $s$ as
    \begin{equation} \label{eq:approximated-ood-action-definition}
        \hat{A}_{ood}(s) = \{a \mid || a - f_{\text{cvae}}(s,a) ||_2 \geq \delta \}
    \end{equation}
    where $\delta$ is a threshold given by the user. 
\end{dfn}
In this paper, we calculate $\delta$ by taking the average distance over data in the dataset.
\begin{equation} \label{eq:dist-threshold}
    \delta = \mathbb{E}_{(s, a) \sim D} \big[||a - f_{cvae}(s,a) ||_2 \big]
\end{equation}
Using \eqref{eq:approximated-ood-action-definition} and \eqref{eq:dist-threshold}, we can explicitly distinguish OOD actions from in-distribution actions. Note that when datasets pose high multi-modal properties, complex generative models, such as diffusion models, can also be applied.

\subsection{Policy Evaluation Step}
It is difficult to perform the exact Q-update defined in \eqref{eq:scq-q-update} as we only have the approximate OOD action sets. 
Therefore, instead we solve the following optimization problem to update the Q-values at iteration $k$ parameterized by $\theta_k$: 
\begin{equation} \label{eq:scq-approximated-q-update}
\theta_{k+1} \leftarrow \underset{\theta}{\operatorname{argmin}} \; \alpha  \mathbb{E}_{s \sim \mathcal{D}, a \sim \hat{\pi}_{\text{ood}}} \big[ Q_{\theta}(s,a) \big] + \frac{1}{2} \mathbb{E}_{(s,a,r, s') \sim \mathcal{D}} \Bigg[ \big( Q_{\theta}(s,a) - \hat{B}^{\pi_{\phi_k}} Q_{\bar{\theta}_k}(s,a) \big)^2 \Bigg]\,,
\end{equation}
where $\hat{\pi}_{\text{ood}}$ constraints $\pi_{\phi_k}$ to only be in the set of approximate OOD actions $\hat{A}_{ood}(s)$ for state $s$. 
Here, the approximated Bellman update term $\hat{B}^{\pi_{\phi_k}} Q_{\bar{\theta}_k}(s,a)$ is expressed as:
\begin{align*}
\hat{B}^{\pi_{\phi_k}} Q_{\bar{\theta}_k}(s,a) = r + \gamma Q_{\bar{\theta}_k}(s', a')\,, \text{where} \qquad a' \sim \pi_{\phi_k}(\cdot \mid s')\,,
\end{align*}
and $\bar{\theta}_k$ are again target parameters that are updated more slowly than the parameters being optimized. 

\subsection{Policy Improvement Step}
Finally, we update the policy $\pi_{\phi_k}(\cdot|s)$. 
Our policy improvement step is similar to that of Soft Actor-Critic (SAC), which performs:
\begin{equation} 
\label{eq:scq-policy-update}
\phi_{k+1} \leftarrow \underset{\phi}{\operatorname{argmax}} \; \mathbb{E}_{(s, a_b) \sim \mathcal{D}, a \sim \pi_{\phi}(\cdot|s)} \left[ Q_{\theta_{k+1}}(s, a) - \lambda \log \pi_{\phi}(a|s) - \beta (a - a_b) \right] 
\end{equation}
where $\lambda\geq0$ is a hyperparameter to tune the entropy temperature and $\beta\geq0$ is a hyperparameter to ensure the learning policy follows the behavior policy $\pi_{b}$. When updating the policy parameter, we incorporate the behavior cloning term $\beta (a - a_{\pi_{b}})$, inspired by ReBRAC \cite{rebrac} and TD3-BC \cite{td3-bc}. Although this term is disabled in simpler environments such as Mujoco, we found it enhances SCQ's performance in more complex environments. Therefore, we only apply it during SCQ experiments with AntMaze.

\begin{figure}[ht]
\begin{algorithm}[H]
    \caption{Strategically Conservative Q-learning (SCQ)}
    \label{alg:scq}
    \begin{algorithmic}[1]    
    \REQUIRE Dataset $\mathcal{D}=\{(s,a, r, s')\}$, Maximum number of iterations $T$
    \STATE Initialize parameters $\theta, \bar{\theta}, \phi$
    \FOR{k=0 to T}
        \STATE Sample a mini-batch $B$ from the dataset $\mathcal{D}$
        \STATE Update CVAE to reconstrate $a$ from $(s, a) \sim B$
        \STATE For state $s \sim B$, sample OOD actions $a_{\text{ood}} \sim \hat{\pi}_{\text{ood}}(\cdot | s)$ via rejection sampling on $\pi_{\phi}$ using the CVAE and Equation~\ref{eq:approximated-ood-action-definition}
        \STATE Update Q-value parameters $\theta$ using Equation~\ref{eq:scq-approximated-q-update}
        \STATE Update policy parameters $\phi$ using Equation~\ref{eq:scq-policy-update}
        \STATE Update target parameters $\bar{\theta} \leftarrow (1 -\upsilon) \bar{\theta} + \upsilon \theta$ where $0\leq\upsilon\leq1$
    \ENDFOR
    \end{algorithmic}
\end{algorithm}
\end{figure}

%%%%%%%%%%%%%%%%%%%%%%%%%%%%%%%%%%%%%%%%%%%%%%%%%%%%%%%%%%%%%%%%%%%%%%
%%%%%%%%%%%%%%%%%%%%%%%%%%%%%%%%%%%%%%%%%%%%%%%%%%%%%%%%%%%%%%%%%%%%%%
%%%%%%%%%%%%%%%%%%%%%%%%%%%%%%%%%%%%%%%%%%%%%%%%%%%%%%%%%%%%%%%%%%%%%%
% Experiments 
\section{Experiments} \label{sec:experiments}
In this section, we evaluate the efficacy of SCQ by conducting experiments on the D4RL dataset \cite{d4rl2021}. The primary objective of our experimental framework is to benchmark our method against previous offline RL algorithms, with a specific emphasis on comparing it to other distance-sensitive offline reinforcement learning approaches.

\subsection{Evaluation on Mujoco and Antmaze} \label{sec:exp-mujoco-antmaze}
We compare SCQ with a range of recent model-free baseline algorithms such as CQL\cite{cql}, IQL\cite{iql}, TD3-BC\cite{td3-bc}, DOGE\cite{doge}, MCQ\cite{mcq} and SAC-RND\cite{rnd-sac} using Gym-Mujoco and Antmaze tasks. Mujoco environment has dense rewards, and it is easier for these model-free approaches to extract optimal policies from the static dataset. In contrast, Antmaze tasks present a considerable challenge due to their sparse rewards. Moreover, they contain less optimal or near-optimal trajectories in the dataset than Mujoco and therefore necessitate the "stitching" together of suboptimal trajectory segments to navigate from start to goal within the maze\cite{iql, d4rl2021}. We train SCQ 1 million steps for each task. Moreover, unlike \cite{td3-bc}, SCQ does not require state normalization. Note that, we do not compare MCQ with SCQ as MCQ does not provide its official scores for Antmaze tasks\footnote{We ran official MCQ code for Antmaze tasks by tuning parameters, but the final score is 0 for most of the tasks}. Comprehensive details on the experimental setup and methodologies are provided in the Appendix.\ref{sec:appendix-experimental-details}. 

Tables \ref{tab:mujoco-result} and \ref{tab:antmaze-result} summarize the average scores and associated standard deviations for the Mujoco and Antmaze tasks, respectively. We derived the CQL scores from \cite{ghasemipour2022why}, the TD3-BC, IQL, and SAC-RND scores from \cite{rebrac}, and the DOGE scores from the original paper\cite{doge}. It is pertinent to note that DOGE does not provide scores for the `-expert` dataset; therefore, we run their official code using 10 random seeds. Since we were unable to reproduce official MCQ scores, we rerun their official code with 10 random seeds.

These findings demonstrate that SCQ achieves superior, state-of-the-art performance. Particularly in Antmaze tasks, SCQ consistently outperforms baseline methods by a significant margin. By comparing MCQ and SCQ,  we can tell the explicit separation of out-of-distribution data and the minimization of out-of-distribution Q-values are critical, indeed indispensable, for achieving exceptional performance.

\begin{table}[h]
  \caption{Comparison of normalized average scores for SCQ against baseline methods on Gym-MuJoCo tasks, based on the final 10 evaluations. These experiments were conducted using MuJoCo "-v2" datasets across 10 random seeds. The labels are defined as follows: r = random, m = medium, m-r = medium-replay, m-e = medium-expert, e = expert. The highest mean score is highlighted in bold, and the number following the symbol $\pm$ indicates the standard deviation across different seeds.}
  \label{tab:mujoco-result}
  \centering
  \footnotesize
  \scalebox{0.86}[0.93]{
  \begin{tabular}{llllllll}
    \toprule
    {Task Name} & {TD3+BC} & {IQL} & {CQL} & {DOGE} & {MCQ} & {SAC-RND} & {SCQ} \\
    \midrule
    halfcheetah-r   & 30.9 $\pm$ 0.4 & 19.5 $\pm$ 0.8 & \textbf{31.1 $\pm$ 3.5} & 17.8 $\pm$ 1.2 & 25.6 $\pm$ 1.4 & 29.5 $\pm$ 1.5  & 26.9 $\pm$ 0.9 \\
    hopper-r        & 8.5  $\pm$ 0.7 & 10.1 $\pm$ 5.9 & 5.3  $\pm$ 0.6 & 21.1 $\pm$ 12.6 & 12.7 $\pm$ 0.9 & 8.1 $\pm$ 2.4 & \textbf{31.9 $\pm$ 0.7} \\
    walker2d-r      & 2.7  $\pm$ 3.6 & 11.3 $\pm$ 7.0 & 7.5  $\pm$ 1.7 & 0.9 $\pm$ 2.4  & 5.9 $\pm$ 5.1 & \textbf{18.4 $\pm$ 4.5} & 4.75 $\pm$ 6.2 \\
    halfcheetah-m   & 54.7 $\pm$ 0.9 & 50.0 $\pm$ 0.8 & 45.3 $\pm$ 0.7 & 45.3 $\pm$ 0.6 & 54.4 $\pm$ 1.8 & 65.6 $\pm$ 1.0 & \textbf{68.3 $\pm$ 1.6} \\
    hopper-m        & 60.9 $\pm$ 7.6 & 65.2 $\pm$ 4.2 & 64.0 $\pm$ 0.8 & 98.6 $\pm$ 2.1 & 78.4 $\pm$ 4.3 & \textbf{102.0 $\pm$ 1.0} & 89.5 $\pm$ 5.2  \\
    walker2d-m      & 77.0 $\pm$ 2.9 & 80.7 $\pm$ 3.4 & 79.5 $\pm$ 1.2 & 86.8 $\pm$ 0.8 & 78.6 $\pm$ 1.7 & 82.5 $\pm$ 3.6 &\textbf{86.9 $\pm$ 0.6}  \\
    halfcheetah-m-r & 45.0 $\pm$ 1.1 & 47.2 $\pm$ 3.6 & 45.3 $\pm$ 0.9 & 42.8 $\pm$ 0.6 & 42.2 $\pm$ 0.7  & 51.0 $\pm$ 0.8 & \textbf{57.2 $\pm$ 1.8} \\
    hopper-m-r      & 55.1 $\pm$ 31.7& 89.6 $\pm$ 13.2& 86.3 $\pm$ 7.3 & 76.2 $\pm$ 17.7 & 88.6 $\pm$ 10.2 & 98.1 $\pm$ 5.3 & \textbf{101.4 $\pm$ 1.9} \\
    walker2d-m-r    & 68.0 $\pm$ 16.2& 75.4 $\pm$ 9.5 & 76.8 $\pm$ 0.0  & 87.3 $\pm$ 2.3  & 83.7 $\pm$ 9.0 & 77.3 $\pm$ 7.9 & \textbf{87.9 $\pm$ 4.7}  \\
    halfcheetah-m-e & 89.1 $\pm$ 5.6 & 92.7 $\pm$ 2.8 & 95.0 $\pm$ 1.4 & 78.7 $\pm$ 8.4 & 91.8 $\pm$ 0.9 & \textbf{101.1 $\pm$ 5.2}  & 98.8 $\pm$ 0.6 \\
    hopper-m-e      & 87.8 $\pm$ 10.5& 85.5 $\pm$ 29.7& 96.9 $\pm$ 15.1 & 102.7 $\pm$ 5.2  & 86.9 $\pm$ 21.3 & 107.0 $\pm$ 6.4 & \textbf{110.6 $\pm$ 1.9}  \\
    walker2d-m-e    & 110.0$\pm$ 0.6 & 96.9 $\pm$ 32.3& 109.3$\pm$ 0.3  & 110.4 $\pm$ 1.5 & 105.2 $\pm$ 1.6 & 111.6 $\pm$ 0.3 & \textbf{112.2 $\pm$ 0.5}  \\
    halfcheetah-e   & 93.4 $\pm$ 0.4 & 95.5 $\pm$ 2.1 & 97.3 $\pm$ 1.1 & 88.2 $\pm$ 2.1 & 93.8 $\pm$ 0.3  & \textbf{105.9 $\pm$ 1.7} & 103.0 $\pm$ 2.4 \\
    hopper-e        & 109.6$\pm$ 3.7 & 108.8$\pm$ 3.1 & 106.5$\pm$ 9.1 & 104.6 $\pm$ 5.6 & 95.0 $\pm$ 1.0 & 100.1 $\pm$ 8.3 & \textbf{111.3 $\pm$ 2.6} \\
    walker2d-e      & 110.0$\pm$ 0.6 & 96.9 $\pm$ 32.3& 109.3$\pm$ 0.3 & 109.3 $\pm$ 0.7 & 107.0 $\pm$ 0.6 & 112.3 $\pm$ 0.2 & \textbf{114.4 $\pm$ 1.3}  \\
    \midrule
    average & 66.8 & 68.4 & 70.4 & 71.4 & 70.0 & 78.0 &  \textbf{80.3} \\
    \bottomrule
  \end{tabular}
  }
\end{table}

\begin{table}[h]
  \caption{Comparison of normalized average scores for SCQ against baseline methods on Antmaze tasks, based on the final 100 evaluations. We use Antmaze "-v2" datasets across 10 random seeds to evaluate each algorithm. The labels are defined as follows: u = umaze, u-d = umaze-diverse, m-p = medium-play, m-d = medium-diverse, l-p = large-play, l-d = large-diverse. The highest mean score is highlighted in bold, and the number following the symbol $\pm$ indicates the standard deviation across different seeds.}
  \label{tab:antmaze-result}
  \centering
  \begin{tabular}{lllllll}
    \toprule
    {Task Name} & {TD3+BC} & {IQL} & {CQL} & {DOGE} & {SAC-RND} & {SCQ} \\
    \midrule
    antmaze-u   & 66.3 $\pm$ 6.2 & 83.3 $\pm$ 4.5 & 74.0 & 97.0 $\pm$ 1.8 & 97.0 $\pm$ 1.5 & \textbf{97.8 $\pm$ 1.1}  \\
    antmaze-u-d & 53.8 $\pm$ 8.5 & 70.6 $\pm$ 3.7 & 84.0 & 63.5 $\pm$ 9.3 & 66.0 $\pm$ 25.0 & \textbf{89.5 $\pm$ 9.3}  \\
    antmaze-m-p & 26.5 $\pm$ 18.4& 64.6 $\pm$ 4.9 & 61.2 & 80.6 $\pm$ 6.5 & 38.5 $\pm$ 29.4 & \textbf{81.1 $\pm$ 16.2} \\
    antmaze-m-d & 25.9 $\pm$ 15.3& 61.7 $\pm$ 6.1 & 53.7 & 77.6 $\pm$ 6.1 & 74.7 $\pm$ 10.7 & \textbf{79.4 $\pm$ 10.1} \\
    antmaze-l-p & 0.0  $\pm$ 0.0 & 42.5 $\pm$ 6.5 & 15.8 & 48.2 $\pm$ 8.1 & 43.9 $\pm$ 29.2 & \textbf{67.2 $\pm$ 10.5} \\
    antmaze-l-d & 0.0  $\pm$ 0.0 & 27.6 $\pm$ 7.8 & 14.9 & 36.4 $\pm$ 9.1 & 45.7 $\pm$ 28.5 & \textbf{60.0 $\pm$ 17.6} \\
    \midrule
    average & 28.7 & 58.3 & 50.6 & 67.2 & 60.9 & \textbf{79.2} \\
    \bottomrule
  \end{tabular}
\end{table}

\subsection{Ablation study} \label{sec:exp-ablation-study}
Exploring parameter sensitivity is a critical component of algorithm evaluation. In this study, we perform ablation experiments to assess the impact of varying the hyperparameter $\alpha$ on HalfCheetah and Antmaze tasks. We compare the baseline parameter set against two modifications: the first involves setting $\alpha=0$ to evaluate the effect of eliminating Q constraints, while the second replaces Q constraints with Layer Normalization (LN). Previous studies suggest that Layer Normalization effectively prevents the overestimation of Q-values \cite{rebrac, kumar2023offline, LNExplanation}. Throughout the experiments, we only change the parameter $\alpha$, maintaining default values for all others.

We report the normalized score of each method in Table \ref{tab:ablation}. First, we can see that setting $alpha=0$ significantly impacts performance, and the performance deteriorates drastically in most tasks. This indicates that Q minimization is critical for our method. Moreover, we can find that SCQ has higher scores than Layer Normalization in most cases. One possible reason is that Layer Normalization imposes excessive constraints on Q-values, resulting in inaccurate estimations. In the Appendix.\ref{sec:appendix-additional-ablation-studies}, we provide additional ablation results that explore variations in the parameter $\alpha$.

\begin{table}[ht]
  \caption{Ablation results of SCQ on Gym-Mujoco and Antmaze tasks. We evaluated each method using the "-v2" datasets across 10 random seeds. The highest mean score is highlighted in bold, and the number following the symbol $\pm$ indicates the standard deviation across different seeds.}
  \label{tab:ablation}
  \centering
  \begin{tabular}{llll}
    \toprule
    {Task Name} & {SAC($\alpha$ = 0)} & {LN} & {SCQ} \\
    \midrule
    halfcheetah-medium        & 49.6 $\pm$ 10.3 & 62.4 $\pm$ 1.4 & \bf{68.3} $\pm$ 1.6  \\
    halfcheetah-medium-replay & 39.3 $\pm$ 12.6 & 45.9 $\pm$ 4.2 & \bf{57.2} $\pm$ 1.8  \\
    halfcheetah-medium-expert & 13.4 $\pm$ 9.0  & 48.6 $\pm$ 1.4 & \bf{98.8} $\pm$ 0.6  \\
    antmaze-medium-play       & 64.0 $\pm$ 31.7 & 74.1 $\pm$ 8.8 & \bf{81.1} $\pm$ 16.2 \\
    antmaze-medium-diverse    & 70.9 $\pm$ 28.3 & 50.6 $\pm$ 21.7 & \bf{79.4} $\pm$ 10.1 \\
    antmaze-large-play        & 61.6 $\pm$ 33.1 & \bf{72.4} $\pm$ 21.3 & 67.2 $\pm$ 10.5 \\
    antmaze-large-diverse     & \bf{62.6} $\pm$ 14.8 & 45.6 $\pm$ 31.3 & 60.0 $\pm$ 17.6 \\
    \bottomrule
  \end{tabular}
\end{table}

\subsection{X\% Offline Dataset}
It is one of the most challenging situations for offline reinforcement learning to obtain an optimal policy from a small dataset. Insufficient data can harm the accuracy of estimated Q-values, potentially leading to divergence in the worst case. Layer Normalization has been shown to exhibit robustness to dataset size variations \cite{LNExplanation}. In this experiment, we compare SCQ with Layer Normalization in terms of the robustness of the data size. We systematically reduce the size of the D4RL dataset by randomly selecting samples from the original dataset, scaling it down to 50\%, 30\%, and 10\% of its original size. Note that we use the same parameters as described in Section\ref{sec:exp-mujoco-antmaze} throughout these tests.

The results are shown in Fig.\ref{fig:exp-small-datasize}. We can find that SCQ typically outperforms Layer Normalization even when the dataset size decreases. This is attributed to the tendency of Layer Normalization to be overly conservative, often leading to an underestimation of Q-values, which is also discussed in Section \ref{sec:exp-ablation-study}.

\begin{figure}[hbtp]
  \centering
  \includegraphics[scale=0.45]{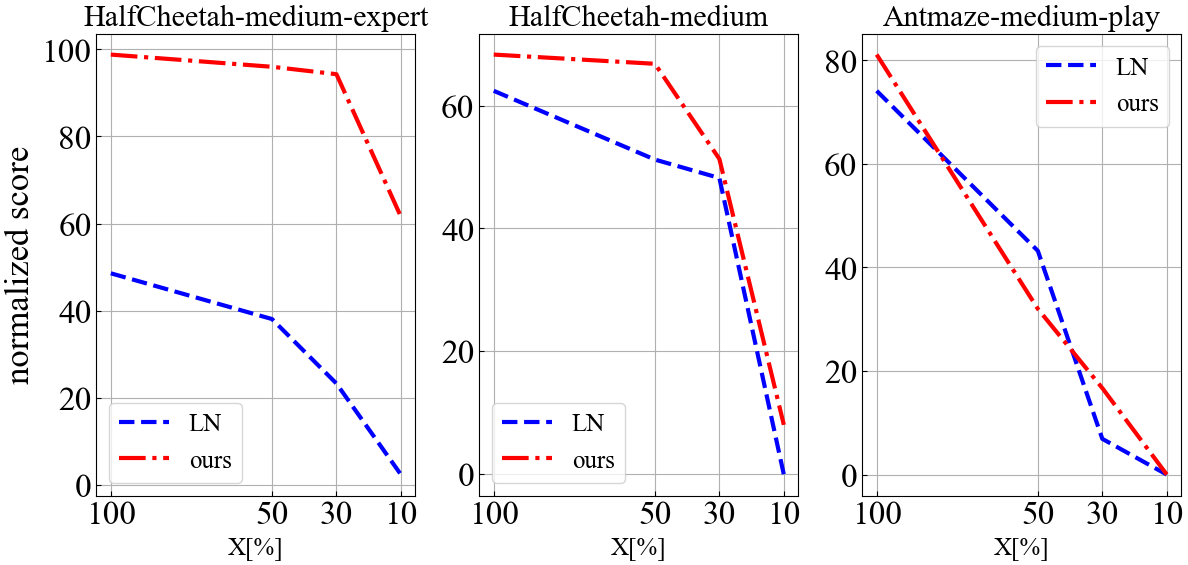}
  \caption{Comparison of normalized average scores for SCQ against Layer Normalization (LN) on D4RL X\% dataset. We evaluated each method using the "-v2" datasets across 10 random seeds.}
  \label{fig:exp-small-datasize}
\end{figure}

\section{Conclusion} \label{sec:conclusion}
In this work, we design a new offline reinforcement algorithm, called Strategically Conservative Q-learning (SCQ), which is designed to exploit the interpolation capabilities of deep neural networks. This new approach leverages the neural network interpolation ability by explicitly minimizing the Q-function in regions that are far from the data distribution. We prove that the estimated Q function obtained by the proposed method is the point-wise lower bound of the true Q function. Experimental evaluations using the D4RL dataset illustrate that our algorithm outperforms baseline methods by a considerable margin and exhibits robustness in the face of reduced dataset sizes.

\paragraph{Limitations.} Despite the theoretical foundations and experimental results supporting SCQ, it has two notable drawbacks. First, the CVAE may make errors in distinguishing out-of-distribution (ODD) actions from in-distribution (ID) actions, potentially misclassifying ID actions as OOD actions. Second, SCQ necessitates extensive parameter tuning across different environments. Future research will thus aim to develop a method for automatic parameter adjustment.

\newpage
\bibliographystyle{abbrvnat}
\bibliography{bibliography}

\begin{thebibliography}{54}
\providecommand{\natexlab}[1]{#1}
\providecommand{\url}[1]{\texttt{#1}}
\expandafter\ifx\csname urlstyle\endcsname\relax
  \providecommand{\doi}[1]{doi: #1}\else
  \providecommand{\doi}{doi: \begingroup \urlstyle{rm}\Url}\fi

\bibitem[Arora et~al.(2019)Arora, Du, Hu, Li, and Wang]{Arora2019FineGrained}
S.~Arora, S.~Du, W.~Hu, Z.~Li, and R.~Wang.
\newblock Fine-grained analysis of optimization and generalization for overparameterized two-layer neural networks.
\newblock In \emph{36th International Conference on Machine Learning, ICML 2019}, 36th International Conference on Machine Learning, ICML 2019, pages 477--502. International Machine Learning Society (IMLS), Jan. 2019.
\newblock 36th International Conference on Machine Learning, ICML 2019 ; Conference date: 09-06-2019 Through 15-06-2019.

\bibitem[Atkeson and Schaal(1997)]{Atkeson1997RobotLearning}
C.~G. Atkeson and S.~Schaal.
\newblock Robot learning from demonstration.
\newblock In \emph{Proceedings of the Fourteenth International Conference on Machine Learning}, ICML '97, page 12–20, San Francisco, CA, USA, 1997. Morgan Kaufmann Publishers Inc.
\newblock ISBN 1558604863.

\bibitem[Barnard and Wessels(1992)]{Barnard1992Extrapolation}
E.~Barnard and L.~Wessels.
\newblock Extrapolation and interpolation in neural network classifiers.
\newblock \emph{IEEE Control Systems Magazine}, 12\penalty0 (5):\penalty0 50--53, 1992.
\newblock \doi{10.1109/37.158898}.

\bibitem[Chen et~al.(2020)Chen, Zhou, Wang, Wang, Wu, Deng, and Ross]{chen2020bail}
X.~Chen, Z.~Zhou, Z.~Wang, C.~Wang, Y.~Wu, Q.~Deng, and K.~Ross.
\newblock {\{}BAIL{\}}: Best-action imitation learning for batch deep reinforcement learning, 2020.
\newblock URL \url{https://openreview.net/forum?id=BJlnmgrFvS}.

\bibitem[Dadashi et~al.(2021)Dadashi, Rezaeifar, Vieillard, Hussenot, Pietquin, and Geist]{Dadashi2021Offline}
R.~Dadashi, S.~Rezaeifar, N.~Vieillard, L.~Hussenot, O.~Pietquin, and M.~Geist.
\newblock Offline reinforcement learning with pseudometric learning.
\newblock In M.~Meila and T.~Zhang, editors, \emph{Proceedings of the 38th International Conference on Machine Learning}, volume 139 of \emph{Proceedings of Machine Learning Research}, pages 2307--2318. PMLR, 18--24 Jul 2021.
\newblock URL \url{https://proceedings.mlr.press/v139/dadashi21a.html}.

\bibitem[Florence et~al.(2022)Florence, Lynch, Zeng, Ramirez, Wahid, Downs, Wong, Lee, Mordatch, and Tompson]{Pete2022ImplicitBehaviorCloning}
P.~Florence, C.~Lynch, A.~Zeng, O.~A. Ramirez, A.~Wahid, L.~Downs, A.~Wong, J.~Lee, I.~Mordatch, and J.~Tompson.
\newblock Implicit behavioral cloning.
\newblock In A.~Faust, D.~Hsu, and G.~Neumann, editors, \emph{Proceedings of the 5th Conference on Robot Learning}, volume 164 of \emph{Proceedings of Machine Learning Research}, pages 158--168. PMLR, 08--11 Nov 2022.
\newblock URL \url{https://proceedings.mlr.press/v164/florence22a.html}.

\bibitem[Fu et~al.(2021)Fu, Kumar, Nachum, Tucker, and Levine]{d4rl2021}
J.~Fu, A.~Kumar, O.~Nachum, G.~Tucker, and S.~Levine.
\newblock D4rl: Datasets for deep data-driven reinforcement learning, 2021.
\newblock URL \url{https://openreview.net/forum?id=px0-N3_KjA}.

\bibitem[Fujimoto and Gu(2021)]{td3-bc}
S.~Fujimoto and S.~Gu.
\newblock A minimalist approach to offline reinforcement learning.
\newblock In A.~Beygelzimer, Y.~Dauphin, P.~Liang, and J.~W. Vaughan, editors, \emph{Advances in Neural Information Processing Systems}, 2021.
\newblock URL \url{https://openreview.net/forum?id=Q32U7dzWXpc}.

\bibitem[Fujimoto et~al.(2018{\natexlab{a}})Fujimoto, Meger, and Precup]{Fujimoto2018OffPolicyDR}
S.~Fujimoto, D.~Meger, and D.~Precup.
\newblock Off-policy deep reinforcement learning without exploration.
\newblock In \emph{International Conference on Machine Learning}, 2018{\natexlab{a}}.
\newblock URL \url{https://api.semanticscholar.org/CorpusID:54457299}.

\bibitem[Fujimoto et~al.(2018{\natexlab{b}})Fujimoto, van Hoof, and Meger]{fujimoto2018addressing}
S.~Fujimoto, H.~van Hoof, and D.~Meger.
\newblock Addressing function approximation error in actor-critic methods.
\newblock In \emph{International Conference on Machine Learning (ICML)}, pages 1587--1596, 2018{\natexlab{b}}.

\bibitem[Fujimoto et~al.(2018{\natexlab{c}})Fujimoto, van Hoof, and Meger]{td3}
S.~Fujimoto, H.~van Hoof, and D.~Meger.
\newblock Addressing function approximation error in actor-critic methods.
\newblock In \emph{International Conference on Machine Learning}, 2018{\natexlab{c}}.
\newblock URL \url{https://api.semanticscholar.org/CorpusID:3544558}.

\bibitem[Ghasemipour et~al.(2022)Ghasemipour, Gu, and Nachum]{ghasemipour2022why}
S.~K.~S. Ghasemipour, S.~S. Gu, and O.~Nachum.
\newblock Why so pessimistic? estimating uncertainties for offline {RL} through ensembles, and why their independence matters., 2022.
\newblock URL \url{https://openreview.net/forum?id=wQ7RCayXUSl}.

\bibitem[Haarnoja et~al.(2018{\natexlab{a}})Haarnoja, Zhou, Abbeel, and Levine]{sac}
T.~Haarnoja, A.~Zhou, P.~Abbeel, and S.~Levine.
\newblock Soft actor-critic: Off-policy maximum entropy deep reinforcement learning with a stochastic actor.
\newblock In J.~Dy and A.~Krause, editors, \emph{Proceedings of the 35th International Conference on Machine Learning}, volume~80 of \emph{Proceedings of Machine Learning Research}, pages 1861--1870. PMLR, 10--15 Jul 2018{\natexlab{a}}.
\newblock URL \url{https://proceedings.mlr.press/v80/haarnoja18b.html}.

\bibitem[Haarnoja et~al.(2018{\natexlab{b}})Haarnoja, Zhou, Hartikainen, Tucker, Ha, Tan, Kumar, Zhu, Gupta, Abbeel, and Levine]{sac2}
T.~Haarnoja, A.~Zhou, K.~Hartikainen, G.~Tucker, S.~Ha, J.~Tan, V.~Kumar, H.~Zhu, A.~Gupta, P.~Abbeel, and S.~Levine.
\newblock Soft actor-critic algorithms and applications.
\newblock \emph{CoRR}, abs/1812.05905, 2018{\natexlab{b}}.
\newblock URL \url{http://arxiv.org/abs/1812.05905}.

\bibitem[Haarnoja et~al.(2019)Haarnoja, Ha, Zhou, Tan, Tucker, and Levine]{sac3}
T.~Haarnoja, S.~Ha, A.~Zhou, J.~Tan, G.~Tucker, and S.~Levine.
\newblock Learning to walk via deep reinforcement learning.
\newblock In \emph{Proceedings of Robotics: Science and Systems}, FreiburgimBreisgau, Germany, June 2019.
\newblock \doi{10.15607/RSS.2019.XV.011}.

\bibitem[Haley and Soloway(1992)]{Haley1992Extrapolation}
P.~Haley and D.~Soloway.
\newblock Extrapolation limitations of multilayer feedforward neural networks.
\newblock In \emph{[Proceedings 1992] IJCNN International Joint Conference on Neural Networks}, volume~4, pages 25--30 vol.4, 1992.
\newblock \doi{10.1109/IJCNN.1992.227294}.

\bibitem[Jacot et~al.(2018)Jacot, Gabriel, and Hongler]{ntk}
A.~Jacot, F.~Gabriel, and C.~Hongler.
\newblock Neural tangent kernel: Convergence and generalization in neural networks.
\newblock In S.~Bengio, H.~Wallach, H.~Larochelle, K.~Grauman, N.~Cesa-Bianchi, and R.~Garnett, editors, \emph{Advances in Neural Information Processing Systems}, volume~31. Curran Associates, Inc., 2018.
\newblock URL \url{https://proceedings.neurips.cc/paper_files/paper/2018/file/5a4be1fa34e62bb8a6ec6b91d2462f5a-Paper.pdf}.

\bibitem[Jin et~al.(2020)Jin, Yang, and Wang]{pevi}
Y.~Jin, Z.~Yang, and Z.~Wang.
\newblock Is pessimism provably efficient for offline rl?
\newblock \emph{CoRR}, abs/2012.15085, 2020.

\bibitem[Jin et~al.(2021)Jin, Yang, and Wang]{jin2020pessimism}
Y.~Jin, Z.~Yang, and Z.~Wang.
\newblock Is pessimism provably efficient for offline rl?
\newblock In \emph{International Conference on Machine Learning}. PMLR, 2021.

\bibitem[Kingma and Ba(2014)]{adam}
D.~Kingma and J.~Ba.
\newblock Adam: A method for stochastic optimization.
\newblock \emph{International Conference on Learning Representations}, 12 2014.

\bibitem[Kostrikov et~al.(2021)Kostrikov, Fergus, Tompson, and Nachum]{Ilya2021Fisher}
I.~Kostrikov, R.~Fergus, J.~Tompson, and O.~Nachum.
\newblock Offline reinforcement learning with fisher divergence critic regularization.
\newblock In M.~Meila and T.~Zhang, editors, \emph{ICML}, volume 139 of \emph{Proceedings of Machine Learning Research}, pages 5774--5783. PMLR, 2021.
\newblock URL \url{http://dblp.uni-trier.de/db/conf/icml/icml2021.html#KostrikovFTN21}.

\bibitem[Kostrikov et~al.(2022)Kostrikov, Nair, and Levine]{iql}
I.~Kostrikov, A.~Nair, and S.~Levine.
\newblock Offline reinforcement learning with implicit q-learning.
\newblock In \emph{International Conference on Learning Representations}, 2022.
\newblock URL \url{https://openreview.net/forum?id=68n2s9ZJWF8}.

\bibitem[Kumar et~al.(2019{\natexlab{a}})Kumar, Fu, Soh, Tucker, and Levine]{Kumar2019Stabilizing}
A.~Kumar, J.~Fu, M.~Soh, G.~Tucker, and S.~Levine.
\newblock Stabilizing off-policy q-learning via bootstrapping error reduction.
\newblock In H.~Wallach, H.~Larochelle, A.~Beygelzimer, F.~d\textquotesingle Alch\'{e}-Buc, E.~Fox, and R.~Garnett, editors, \emph{Advances in Neural Information Processing Systems}, volume~32. Curran Associates, Inc., 2019{\natexlab{a}}.
\newblock URL \url{https://proceedings.neurips.cc/paper_files/paper/2019/file/c2073ffa77b5357a498057413bb09d3a-Paper.pdf}.

\bibitem[Kumar et~al.(2019{\natexlab{b}})Kumar, Fu, Soh, Tucker, and Levine]{bear}
A.~Kumar, J.~Fu, M.~Soh, G.~Tucker, and S.~Levine.
\newblock Stabilizing off-policy q-learning via bootstrapping error reduction.
\newblock In \emph{Neural Information Processing Systems (NeurIPS)}, 2019{\natexlab{b}}.

\bibitem[Kumar et~al.(2020{\natexlab{a}})Kumar, Zhou, Tucker, and Levine]{cql}
A.~Kumar, A.~Zhou, G.~Tucker, and S.~Levine.
\newblock Conservative q-learning for offline reinforcement learning.
\newblock In H.~Larochelle, M.~Ranzato, R.~Hadsell, M.~Balcan, and H.~Lin, editors, \emph{Advances in Neural Information Processing Systems}, volume~33, pages 1179--1191. Curran Associates, Inc., 2020{\natexlab{a}}.
\newblock URL \url{https://proceedings.neurips.cc/paper_files/paper/2020/file/0d2b2061826a5df3221116a5085a6052-Paper.pdf}.

\bibitem[Kumar et~al.(2020{\natexlab{b}})Kumar, Zhou, Tucker, and Levine]{kumar2020conservative}
A.~Kumar, A.~Zhou, G.~Tucker, and S.~Levine.
\newblock Conservative q-learning for offline reinforcement learning.
\newblock \emph{arXiv preprint arXiv:2006.04779}, 2020{\natexlab{b}}.

\bibitem[Kumar et~al.(2023)Kumar, Agarwal, Geng, Tucker, and Levine]{kumar2023offline}
A.~Kumar, R.~Agarwal, X.~Geng, G.~Tucker, and S.~Levine.
\newblock Offline q-learning on diverse multi-task data both scales and generalizes.
\newblock In \emph{The Eleventh International Conference on Learning Representations}, 2023.
\newblock URL \url{https://openreview.net/forum?id=4-k7kUavAj}.

\bibitem[Lagoudakis and Parr(2003)]{lstdq}
M.~G. Lagoudakis and R.~Parr.
\newblock Least-squares policy iteration.
\newblock \emph{J. Mach. Learn. Res.}, 4\penalty0 (null):\penalty0 1107–1149, dec 2003.
\newblock ISSN 1532-4435.

\bibitem[Laskey et~al.(2017)Laskey, Lee, Hsieh, Liaw, Mahler, Fox, and Goldberg]{dart}
M.~Laskey, J.~Lee, W.~Y. Hsieh, R.~Liaw, J.~Mahler, R.~Fox, and K.~Goldberg.
\newblock Iterative noise injection for scalable imitation learning.
\newblock \emph{CoRR}, abs/1703.09327, 2017.
\newblock URL \url{http://arxiv.org/abs/1703.09327}.

\bibitem[Le et~al.(2019)Le, Voloshin, and Yue]{Le2019BatchPL}
H.~M. Le, C.~Voloshin, and Y.~Yue.
\newblock Batch policy learning under constraints.
\newblock In \emph{International Conference on Machine Learning}, 2019.
\newblock URL \url{https://api.semanticscholar.org/CorpusID:84842462}.

\bibitem[Lee et~al.(2021)Lee, Seo, Lee, Abbeel, and Shin]{lee2021offlinetoonline}
S.~Lee, Y.~Seo, K.~Lee, P.~Abbeel, and J.~Shin.
\newblock Offline-to-online reinforcement learning via balanced replay and pessimistic q-ensemble.
\newblock In \emph{5th Annual Conference on Robot Learning}, 2021.
\newblock URL \url{https://openreview.net/forum?id=AlJXhEI6J5W}.

\bibitem[Levine et~al.(2020)Levine, Kumar, Tucker, and Fu]{levine2020offline}
S.~Levine, A.~Kumar, G.~Tucker, and J.~Fu.
\newblock Offline reinforcement learning: Tutorial, review, and perspectives on open problems, 2020.

\bibitem[Li et~al.(2023)Li, Zhan, Xu, Zhu, Liu, and Zhang]{doge}
J.~Li, X.~Zhan, H.~Xu, X.~Zhu, J.~Liu, and Y.-Q. Zhang.
\newblock When data geometry meets deep function: Generalizing offline reinforcement learning.
\newblock In \emph{The Eleventh International Conference on Learning Representations}, 2023.
\newblock URL \url{https://openreview.net/forum?id=lMO7TC7cuuh}.

\bibitem[Loshchilov and Hutter(2017)]{sgdr}
I.~Loshchilov and F.~Hutter.
\newblock {SGDR}: Stochastic gradient descent with warm restarts.
\newblock In \emph{International Conference on Learning Representations}, 2017.
\newblock URL \url{https://openreview.net/forum?id=Skq89Scxx}.

\bibitem[Lyu et~al.(2022)Lyu, Ma, Li, and Lu]{mcq}
J.~Lyu, X.~Ma, X.~Li, and Z.~Lu.
\newblock Mildly conservative q-learning for offline reinforcement learning.
\newblock In A.~H. Oh, A.~Agarwal, D.~Belgrave, and K.~Cho, editors, \emph{Advances in Neural Information Processing Systems}, 2022.
\newblock URL \url{https://openreview.net/forum?id=VYYf6S67pQc}.

\bibitem[Matsushima et~al.(2021)Matsushima, Furuta, Matsuo, Nachum, and Gu]{matsushima2021deploymentefficient}
T.~Matsushima, H.~Furuta, Y.~Matsuo, O.~Nachum, and S.~Gu.
\newblock Deployment-efficient reinforcement learning via model-based offline optimization.
\newblock In \emph{International Conference on Learning Representations}, 2021.
\newblock URL \url{https://openreview.net/forum?id=3hGNqpI4WS}.

\bibitem[Nair et~al.(2021)Nair, Dalal, Gupta, and Levine]{awac}
A.~Nair, M.~Dalal, A.~Gupta, and S.~Levine.
\newblock {\{}AWAC{\}}: Accelerating online reinforcement learning with offline datasets, 2021.
\newblock URL \url{https://openreview.net/forum?id=OJiM1R3jAtZ}.

\bibitem[Nakamoto et~al.(2023)Nakamoto, Zhai, Singh, Mark, Ma, Finn, Kumar, and Levine]{calql}
M.~Nakamoto, Y.~Zhai, A.~Singh, M.~S. Mark, Y.~Ma, C.~Finn, A.~Kumar, and S.~Levine.
\newblock Cal-{QL}: Calibrated offline {RL} pre-training for efficient online fine-tuning.
\newblock In \emph{Thirty-seventh Conference on Neural Information Processing Systems}, 2023.
\newblock URL \url{https://openreview.net/forum?id=GcEIvidYSw}.

\bibitem[Nikulin et~al.(2023)Nikulin, Kurenkov, Tarasov, and Kolesnikov]{rnd-sac}
A.~Nikulin, V.~Kurenkov, D.~Tarasov, and S.~Kolesnikov.
\newblock Anti-exploration by random network distillation.
\newblock In \emph{Proceedings of the 40th International Conference on Machine Learning}, ICML'23. JMLR.org, 2023.

\bibitem[Pastor et~al.(2009)Pastor, Hoffmann, Asfour, and Schaal]{Pastor2009Learning}
P.~Pastor, H.~Hoffmann, T.~Asfour, and S.~Schaal.
\newblock Learning and generalization of motor skills by learning from demonstration.
\newblock In \emph{2009 IEEE International Conference on Robotics and Automation}, pages 763--768, 2009.
\newblock \doi{10.1109/ROBOT.2009.5152385}.

\bibitem[Peng et~al.(2021)Peng, Kumar, Zhang, and Levine]{peng2021advantageweighted}
X.~B. Peng, A.~Kumar, G.~Zhang, and S.~Levine.
\newblock Advantage-weighted regression: Simple and scalable off-policy reinforcement learning, 2021.
\newblock URL \url{https://openreview.net/forum?id=ToWi1RjuEr8}.

\bibitem[Ross et~al.(2011)Ross, Gordon, and Bagnell]{daggar}
S.~Ross, G.~Gordon, and D.~Bagnell.
\newblock A reduction of imitation learning and structured prediction to no-regret online learning.
\newblock In G.~Gordon, D.~Dunson, and M.~Dudík, editors, \emph{Proceedings of the Fourteenth International Conference on Artificial Intelligence and Statistics}, volume~15 of \emph{Proceedings of Machine Learning Research}, pages 627--635, Fort Lauderdale, FL, USA, 11--13 Apr 2011. PMLR.
\newblock URL \url{https://proceedings.mlr.press/v15/ross11a.html}.

\bibitem[Schaal(1996)]{Mozer1996Learning}
S.~Schaal.
\newblock Learning from demonstration.
\newblock In M.~Mozer, M.~Jordan, and T.~Petsche, editors, \emph{Advances in Neural Information Processing Systems}, volume~9. MIT Press, 1996.
\newblock URL \url{https://proceedings.neurips.cc/paper_files/paper/1996/file/68d13cf26c4b4f4f932e3eff990093ba-Paper.pdf}.

\bibitem[Sohn et~al.(2015)Sohn, Lee, and Yan]{cvae}
K.~Sohn, H.~Lee, and X.~Yan.
\newblock Learning structured output representation using deep conditional generative models.
\newblock In C.~Cortes, N.~Lawrence, D.~Lee, M.~Sugiyama, and R.~Garnett, editors, \emph{Advances in Neural Information Processing Systems}, volume~28. Curran Associates, Inc., 2015.
\newblock URL \url{https://proceedings.neurips.cc/paper_files/paper/2015/file/8d55a249e6baa5c06772297520da2051-Paper.pdf}.

\bibitem[Sutton and Barto(2018)]{Sutton1998RL}
R.~S. Sutton and A.~G. Barto.
\newblock \emph{Reinforcement Learning: An Introduction}.
\newblock The MIT Press, second edition, 2018.
\newblock URL \url{http://incompleteideas.net/book/the-book-2nd.html}.

\bibitem[Tarasov et~al.(2023)Tarasov, Kurenkov, Nikulin, and Kolesnikov]{rebrac}
D.~Tarasov, V.~Kurenkov, A.~Nikulin, and S.~Kolesnikov.
\newblock Revisiting the minimalist approach to offline reinforcement learning.
\newblock In \emph{Thirty-seventh Conference on Neural Information Processing Systems}, 2023.
\newblock URL \url{https://openreview.net/forum?id=vqGWslLeEw}.

\bibitem[Wang et~al.(2018)Wang, Xiong, Han, sun, Liu, and Zhang]{wang2018Advances}
Q.~Wang, J.~Xiong, L.~Han, p.~sun, H.~Liu, and T.~Zhang.
\newblock Exponentially weighted imitation learning for batched historical data.
\newblock In S.~Bengio, H.~Wallach, H.~Larochelle, K.~Grauman, N.~Cesa-Bianchi, and R.~Garnett, editors, \emph{Advances in Neural Information Processing Systems}, volume~31. Curran Associates, Inc., 2018.
\newblock URL \url{https://proceedings.neurips.cc/paper_files/paper/2018/file/4aec1b3435c52abbdf8334ea0e7141e0-Paper.pdf}.

\bibitem[Wu et~al.(2020)Wu, Tucker, and Nachum]{wu2020behavior}
Y.~Wu, G.~Tucker, and O.~Nachum.
\newblock Behavior regularized offline reinforcement learning, 2020.
\newblock URL \url{https://openreview.net/forum?id=BJg9hTNKPH}.

\bibitem[Wu et~al.(2021)Wu, Zhai, Srivastava, Susskind, Zhang, Salakhutdinov, and Goh]{uwac}
Y.~Wu, S.~Zhai, N.~Srivastava, J.~M. Susskind, J.~Zhang, R.~Salakhutdinov, and H.~Goh.
\newblock Uncertainty weighted offline reinforcement learning, 2021.
\newblock URL \url{https://openreview.net/forum?id=7hMenh--8g}.

\bibitem[Xu et~al.(2022)Xu, Jiang, Li, and Zhan]{xu2022apolicyguided}
H.~Xu, L.~Jiang, J.~Li, and X.~Zhan.
\newblock A policy-guided imitation approach for offline reinforcement learning.
\newblock In A.~H. Oh, A.~Agarwal, D.~Belgrave, and K.~Cho, editors, \emph{Advances in Neural Information Processing Systems}, 2022.
\newblock URL \url{https://openreview.net/forum?id=CKbqDtZnSc}.

\bibitem[Xu et~al.(2021)Xu, Zhang, Li, Du, Kawarabayashi, and Jegelka]{xu2021how}
K.~Xu, M.~Zhang, J.~Li, S.~S. Du, K.-I. Kawarabayashi, and S.~Jegelka.
\newblock How neural networks extrapolate: From feedforward to graph neural networks.
\newblock In \emph{International Conference on Learning Representations}, 2021.
\newblock URL \url{https://openreview.net/forum?id=UH-cmocLJC}.

\bibitem[Yu et~al.(2021)Yu, Kumar, Rafailov, Rajeswaran, Levine, and Finn]{combo}
T.~Yu, A.~Kumar, R.~Rafailov, A.~Rajeswaran, S.~Levine, and C.~Finn.
\newblock Combo: Conservative offline model-based policy optimization.
\newblock In M.~Ranzato, A.~Beygelzimer, Y.~Dauphin, P.~Liang, and J.~W. Vaughan, editors, \emph{Advances in Neural Information Processing Systems}, volume~34, pages 28954--28967. Curran Associates, Inc., 2021.
\newblock URL \url{https://proceedings.neurips.cc/paper_files/paper/2021/file/f29a179746902e331572c483c45e5086-Paper.pdf}.

\bibitem[Yue et~al.(2023)Yue, Lu, Kang, Song, and Huang]{LNExplanation}
Y.~Yue, R.~Lu, B.~Kang, S.~Song, and G.~Huang.
\newblock Understanding, predicting and better resolving q-value divergence in offline-{RL}.
\newblock In \emph{Thirty-seventh Conference on Neural Information Processing Systems}, 2023.
\newblock URL \url{https://openreview.net/forum?id=71P7ugOGCV}.

\bibitem[Zhang et~al.(2021)Zhang, Kuppannagari, and Viktor]{Zhang2021BRAC}
C.~Zhang, S.~Kuppannagari, and P.~Viktor.
\newblock Brac+: Improved behavior regularized actor critic for offline reinforcement learning.
\newblock In V.~N. Balasubramanian and I.~Tsang, editors, \emph{Proceedings of The 13th Asian Conference on Machine Learning}, volume 157 of \emph{Proceedings of Machine Learning Research}, pages 204--219. PMLR, 17--19 Nov 2021.
\newblock URL \url{https://proceedings.mlr.press/v157/zhang21a.html}.

\end{thebibliography}

%%%%%%%%%%%%%%%%%%%%%%%%%%%%%%%%%%%%%%%%%%%%%%%%%%%%%%%%%%%%
%%%%%%%%%%%%%%%%%%%%%%%%%%%%%%%%%%%%%%%%%%%%%%%%%%%%%%%%%%%%
%%%%%%%%%%%%%%%%%%%%%%%%%%%%%%%%%%%%%%%%%%%%%%%%%%%%%%%%%%%%
%% Appendix
\newpage
\appendix

%%%%%%%%%%%%%%%%%%%%%%%%%%%%%%%%%%%%%%%%%%%%%%%%%%%%%%%%%%%%
%%%%%%%%%%%%%%%%%%%%%%%%%%%%%%%%%%%%%%%%%%%%%%%%%%%%%%%%%%%%
%%%%%%%%%%%%%%%%%%%%%%%%%%%%%%%%%%%%%%%%%%%%%%%%%%%%%%%%%%%%
\section{Proofs}
\label{sec:appendix-proofs}

\subsection{Proof of Theorem \ref{thm:scq-pointwise}} \label{sec:proof-point-wise}
We begin by proving Theorem \ref{thm:scq-pointwise}, demonstrating that the Q values obtained by SCQ serve as $\varepsilon$-point-wise lower bounds of the true Q values. The methodology of this proof follows similar steps as outlined in \cite{cql}. It is important to note that this proof is conducted without accounting for sampling errors introduced by the Bellman update. However, the proof can be readily extended to accommodate an approximate Bellman update by incorporating sampling error terms.

\begin{proof}
    By the assumption, the estimated Q function is represented as a linear function, and it can be written as 
    \begin{equation}
        \hat{Q}^k(s, a) = \langle \phi(s, a), \hat{w}^k \rangle
    \end{equation}
    The optimal weight $\hat{w}^k$ can be obtained by solving the following optimization problem.
    \begin{equation}
    \underset{Q}{\operatorname{min}} \; \alpha_k  \, \mathbb{E}_{s \sim D,a \sim A_{ood}} \left[ Q(s,a) \right] + \frac{1}{2} \mathbb{E}_{s,a,s' \sim D} \left[ \left( Q(s,a) - B^{\pi} {\hat{Q}^k}(s,a) \right)^2 \right]
    \end{equation}
    By substituting $Q(s,a)=w^{\top} \phi(s,a)$, and setting the derivative with respect to $w$ to be $0$, we get
    \begin{equation}
        \alpha_k \sum_{s,a} d^{\pi_{\beta}}(s) \pi_{\text{ood}}(a|s) \phi(s, a) + \sum_{s,a} d^{\pi_{\beta}}(s) \pi_{\beta}(a|s) \left(Q(s,a) - B^{\pi}\hat{Q}^k(s,a)\right) \phi(s,a) = 0
    \end{equation}
    where $\pi_{\text{ood}}(a|s) = \pi(a|s) \mathds{1}_{a\in A_{\text{ood}}}$.
    Let's define the following two new matrices:
    \begin{equation}
        \begin{split}
            D & =\mathrm{diag}(d^{\pi_{\beta}}(s) \pi_{\beta}(a|s) \in \mathbb{R}^{|S||A| \times |S||A|} \\
            \Phi &= \left[\phi(s_1, a_1), \phi(s_2, a_2), \cdots, \phi(s_n, a_n)\right]^{\top} \in \mathbb{R}^{|S||A| \times d}
        \end{split}
    \end{equation}
    Here $|S|$ and $|A|$ are number of states and actions that are in the dataset. By rearranging terms with vectorization, $w_{k+1}$ can be written as:
    \begin{equation}
    \begin{split}
        (\Phi^{\top} D\Phi) \hat{w}^{k+1} &= \Phi^{\top} D \left( B^{\pi}\hat{Q}^k(s,a) \right) - \alpha_k \Phi^{\top}\mathrm{diag}\left[ d^{\pi_{\beta}}(s)  \pi_{ood}(a|s) \right] \\
        &= \Phi^{\top} D \left(B^{\pi}\hat{Q}^k(s,a)\right) - \alpha_k \Phi^{\top} D  \frac{\pi_{ood}(a|s)}{\pi_{\beta}(a|s)} \mathbf{1}
    \end{split}
    \end{equation}
    where $\mathbf{1} \in \mathbb{R}^{_{|S||A|\times 1}}$ is a vector that all entries are $1$.
    Thus the estimated Q values at time step $k+1$ is 
    \begin{equation} \label{eq:esitamted-q-relationship}
    \begin{split}
        \hat{Q}^{k+1} &= \Phi \hat{w}^{k+1} \\
        &= \Phi \left(\Phi^{\top}D\Phi\right)^{-1} \Phi^{\top} D \left( B^{\pi}\hat{Q}^k \right) - \alpha_k \Phi\left(\Phi^{\top}D\Phi\right)^{-1}\Phi^{\top} D \frac{\pi_{ood}(a|s)}{\pi_{\beta}(a|s)} \mathbf{1} \\
        &= \hat{Q}^{k+1}_{\text{LSTD-Q}}  - \alpha_k \Phi(\Phi^{\top}D\Phi)^{-1}\Phi^{\top} D \frac{\pi_{ood}(a|s)}{\pi_{\beta}(a|s)} \mathbf{1} 
    \end{split}
    \end{equation}
    where $\hat{Q}^{k+1}_{\text{LSTD-Q}}=\Phi (\Phi^{\top}D\Phi)^{-1} \Phi^{\top} D (B^{\pi}\hat{Q}^k)$ is estimated Q values obtained from the least-squares temporal difference Q-learning as LSTD-Q\cite{lstdq}. Next we consider the two cases where $\pi_{\text{ood}} \neq 0$ and  $\pi_{\text{ood}} = 0$.
    
    \paragraph{Case 1. $\pi_{\text{ood}}(a|s) \neq 0$}
    In this case, action is sampled from the OOD action sets $A_{\text{ood}}$. Since $\alpha_k \geq 0$ and $ \Phi(\Phi^{\top}D\Phi)^{-1}\Phi^{\top} D \frac{\pi_{ood}(a|s)}{\pi_{\beta}(a|s)} \geq 0$, the following inequality is satisfied for all states $s$ and actions $a$.
    \begin{equation} \label{eq:scq-lower-lstdq}
        \hat{Q}^{k+1}(s, a) \leq \hat{Q}^{k+1}_{\text{LSTD-Q}}(s,a)
    \end{equation}
    Our goal is to show that the estimated Q values $\hat{Q}^{k}(s, a)$ become point-wise lower bound of the true Q values $Q(s, a)$. So far we only show \eqref{eq:scq-lower-lstdq}, which states the estimated Q values from SCQ are point-wise lower bounds of the Q values obtained by the LSTD-Q algorithm. However, due to the interpolation error, $\hat{Q}^{k+1}_{\text{LSTD-Q}}(s,a)$ can be larger than the true Q values. Therefore, we need to choose $\alpha_k$ such that the pessimism term can offset the overestimation caused by $\hat{Q}^{k+1}_{\text{LSTD-Q}}(s,a)$. 
    \begin{equation}
        \begin{split}
            \hat{Q}^{k+1}(s,a) &\leq \hat{Q}^{k+1}_{\text{LSTD-Q}}(s,a)  - \alpha_k \Phi(\Phi^{\top}D\Phi)^{-1}\Phi^{\top} D \frac{\pi_{ood}(a|s)}{\pi_{\beta}(a|s)} \mathbf{1} \\
            &\leq Q(s,a) - \left( Q(s,a) - \hat{Q}^{k+1}_{\text{LSTD-Q}} \right)  - \alpha_k \Phi(\Phi^{\top}D\Phi)^{-1}\Phi^{\top} D \frac{\pi_{ood}(a|s)}{\pi_{\beta}(a|s)} \mathbf{1} \\
            & \leq Q(s,a)
        \end{split}
    \end{equation}
    where $\alpha_k$ is 
    \begin{equation} \label{eq:alpha-condition-pointwise}
        \alpha_k \geq \text{max} \left(\frac{\hat{Q}^{k+1}_{\text{LSTD-Q}} - Q(s,a)}{\Phi(\Phi^{\top}D\Phi)^{-1}\Phi^{\top} D \frac{\pi_{ood}(a|s)}{\pi_{\beta}(a|s)}\mathbf{1}} ,0\right)
    \end{equation}
    Therefore, when $\pi_{\text{ood}} \neq 0$, the estimated Q values become point-wise lower bound with $\alpha$ satisfies \eqref{eq:alpha-condition-pointwise}.
    
    \paragraph{Case 2. $\pi_{\text{ood}}(a|s) = 0$}
    In this case, the sampled action $a$ comes from the dataset or around the dataset. From the \eqref{eq:esitamted-q-relationship}, we can get $\hat{Q}_{k+1}(s,a) = \hat{Q}^{k+1}_{\text{LSTD-Q}}(s,a)$. With Assumption\ref{ass:epsilon-bound}, we can get
    \begin{equation}
    \begin{split}
            \hat{Q}^{k+1}(s,a) &= \hat{Q}^{k+1}_{\text{LSTD-Q}}(s,a) \\
            &= Q(s,a) + \left( \hat{Q}^{k+1}_{\text{LSTD-Q}}(s,a) - Q(s,a) \right) \\
            &= Q(s,a) + \left( \hat{Q}^{k+1}(s,a) - Q(s,a) \right) \\
            &\leq Q(s,a) + \varepsilon \quad \left(a \notin A_{\text{ood}}\right)\\
    \end{split}
    \end{equation}
    Thus, the estimated Q values become $\varepsilon$-point-wise lower-bound of the Q value.
\end{proof}

\subsection{Proof of Theorem \ref{thm:scq-state-value}} \label{sec:proof-state-value}
Here we use the same characters as the same meaning in Appendix\ref{sec:proof-point-wise}.
In this proof, we assume that $\pi_{\text{ood}}(a|s) \neq 0$ for all states $s$ and actions $a$. We first prove the following proposition.

\begin{prop} \label{prop:fpi-difference}
Define the function $f(\pi)$ and $f_{\text{ood}}(\pi)$ in the following way.
    \begin{equation}
    \begin{split}
        f(\pi) &= \pi(a|s)^T P_{\Phi} \left[ \frac{\pi(a|s) - \pi_{\beta}(a|s)}{\pi_{\beta}(a|s)} \right] \\
        f_{\text{ood}}(\pi) &= \pi(a|s)^T P_{\Phi} \left[ \frac{\pi_{\text{ood}}(a|s)}{\pi_{\beta}(a |s)} \right] \\
    \end{split}
    \end{equation}
    where $\pi_{\text{ood}}(a|s) = \pi(a|s) \mathds{1}_{a\in A_{\text{ood}}}$ and $P_{\Phi} := \Phi \left( \Phi^T D \Phi \right)^{-1} \Phi^T D$. Therefore, there exists $\tau$ that satisfies the following inequality under the condition $0 < \tau \leq 1$.
    \begin{equation}
        f(\pi) > \tau f_{\text{ood}}(\pi)
    \end{equation}
\end{prop}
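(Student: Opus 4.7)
The plan is to exploit the natural decomposition $\pi(a|s) = \pi_{\text{ood}}(a|s) + \pi_{\text{id}}(a|s)$, where $\pi_{\text{id}}(a|s) := \pi(a|s)\,\mathds{1}[a \notin A_{\text{ood}}(s)]$, which immediately yields
\begin{equation*}
f(\pi) - f_{\text{ood}}(\pi) \;=\; \pi(a|s)^T P_{\Phi} \!\left[\frac{\pi_{\text{id}}(a|s) - \pi_{\beta}(a|s)}{\pi_{\beta}(a|s)}\right]\,.
\end{equation*}
So the entire difference between the two quantities is driven by the mismatch between the in-distribution slice of the learned policy and the behavior policy $\pi_\beta$. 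I would first unpack this identity, then use it to reduce the statement to the easier question ``is $f(\pi)$ of a manageable sign and magnitude compared to $f_{\text{ood}}(\pi)$?''.

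Next, I would study the ``extra'' term $\pi^T P_\Phi[(\pi_{\text{id}}-\pi_\beta)/\pi_\beta]$ using the fact that $P_\Phi$ is a $D$-orthogonal projection onto $\mathrm{col}(\Phi)$, so in particular $P_\Phi^2 = P_\Phi$, $P_\Phi$ is $D$-self-adjoint, and $\|P_\Phi v\|_D \le \|v\|_D$ for every $v$. A Cauchy--Schwarz bound in the $D$-inner product then controls this term by $\|\pi\|_D\,\|(\pi_{\text{id}}-\pi_\beta)/\pi_\beta\|_D$; because $\pi_\beta$ is (by construction of $A_{\text{ood}}$) concentrated precisely on the in-distribution actions, this ratio is bounded on $\mathrm{supp}(\pi_\beta)$. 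On the other hand, $f_{\text{ood}}(\pi) = \pi^T P_\Phi[\pi_{\text{ood}}/\pi_\beta]$ is assumed nontrivial since $\pi_{\text{ood}}(\cdot|s) \neq \mathbf{0}$.

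Finally, I would conclude by a short case split and a convenient choice of $\tau$. If $f_{\text{ood}}(\pi) \le 0$, then $\tau = 1$ works provided $f(\pi) > f_{\text{ood}}(\pi)$, which follows from the decomposition above once one checks the sign of the extra term. If $f_{\text{ood}}(\pi) > 0$, then I would take $\tau = \min\!\bigl(1,\tfrac{f(\pi)}{2 f_{\text{ood}}(\pi)}\bigr)$, which lies in $(0,1]$ as soon as $f(\pi) > 0$. In either branch the existence of a valid $\tau$ reduces to verifying that the perturbation $f(\pi)-f_{\text{ood}}(\pi)$ is strictly dominated by $f_{\text{ood}}(\pi)$, which is exactly what the Cauchy--Schwarz bound and the support structure of $\pi_\beta$ give.

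The main obstacle will be pinning down the sign and relative magnitude of $f(\pi)$ itself: the identity $f = f_{\text{ood}} + \pi^T P_\Phi[(\pi_{\text{id}}-\pi_\beta)/\pi_\beta]$ makes the difference easy to express but not obviously signed, because $P_\Phi$ is only a projection with respect to $D$, not a positive-semidefinite Gram matrix acting symmetrically on both arguments. The cleanest way around this is to argue the statement quantitatively (via the $D$-norm bound on the extra term) rather than qualitatively, and if necessary to fall back on simply choosing $\tau$ small enough that the inequality becomes trivial; what must be guarded against is the degenerate situation where $f(\pi) \le 0$ while $f_{\text{ood}}(\pi) > 0$, which is ruled out precisely by the assumption that $\pi_\beta$ assigns negligible mass to $A_{\text{ood}}$ so that the perturbation is smaller in magnitude than $f_{\text{ood}}(\pi)$.
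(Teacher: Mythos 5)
Your opening move is exactly the paper's: decompose $\pi = \pi_{\text{id}} + \pi_{\text{ood}}$ and write $f(\pi) - \tau f_{\text{ood}}(\pi) = f_{\text{id}}(\pi) + (1-\tau) f_{\text{ood}}(\pi)$ with $f_{\text{id}}(\pi) = \pi(a|s)^T P_\Phi\bigl[(\pi_{\text{id}}(a|s) - \pi_\beta(a|s))/\pi_\beta(a|s)\bigr]$. Where you diverge is in how you close the argument, and that is where the gap is. The paper does not try to show that the perturbation $f - f_{\text{ood}} = f_{\text{id}}$ is dominated in magnitude by $f_{\text{ood}}$; it instead imports two sign facts and does a case split on $\mathrm{sign}(f_{\text{id}})$: (i) $f(\pi) \ge 0$, quoted from the CQL analysis (this is the nonnegativity of the CQL penalty term $\pi^T P_\Phi[(\pi-\pi_\beta)/\pi_\beta]$, a nontrivial imported lemma), and (ii) $f_{\text{ood}}(\pi) > 0$. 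With these, if $f_{\text{id}} \ge 0$ any $\tau \in (0,1]$ works, and if $f_{\text{id}} < 0$ one needs $\tau \le f/f_{\text{ood}}$, which is a nonnegative threshold by (i) and (ii). Your proposal never establishes either fact, and you yourself flag the sign of $f(\pi)$ as ``the main obstacle.''

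The substitute you offer --- Cauchy--Schwarz in the $D$-inner product to show $|f - f_{\text{ood}}|$ is ``strictly dominated by $f_{\text{ood}}$'' --- does not work. Cauchy--Schwarz bounds $|f_{\text{id}}(\pi)|$ by something like $\|D^{-1}\pi\|_D\,\|(\pi_{\text{id}}-\pi_\beta)/\pi_\beta\|_D$, a quantity with no a priori relationship to $f_{\text{ood}}(\pi) = \pi^T P_\Phi[\pi_{\text{ood}}/\pi_\beta]$: a policy that is almost entirely in-distribution but far from $\pi_\beta$ makes the perturbation large while $f_{\text{ood}}$ is arbitrarily small, so the claimed domination is simply false in general. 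Consequently your chosen $\tau = \min\bigl(1, f/(2f_{\text{ood}})\bigr)$ is only valid ``as soon as $f(\pi) > 0$,'' which is precisely the unproven premise; and your $f_{\text{ood}} \le 0$ branch requires $f_{\text{id}} > 0$, which is exactly the sign you cannot control (the paper's Case~2 is the regime where it is negative). To repair the proof you need to either cite the CQL nonnegativity result for $f(\pi)$ together with an argument that $f_{\text{ood}}(\pi) > 0$ (as the paper does, leaning on $\pi_{\text{ood}} \neq \mathbf{0}$), or prove those positivity facts directly; the projection/Cauchy--Schwarz machinery cannot replace them.
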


\begin{proof}
    We first decompose the learned policy $\pi(a|s)$ in the following way.
    \begin{equation}
        \pi(a|s) = \pi_{\text{idd}}(a|s) + \pi_{\text{ood}}(a|s) 
    \end{equation}
    where $\pi_{\text{idd}}(a|s) = \pi(a|s) \mathds{1}_{a\notin A_{\text{ood}}}$ and $\pi_{\text{ood}}(a|s) = \pi(a|s) \mathds{1}_{a\in A_{\text{ood}}}$. From the proof of CQL\cite{cql}, we already know $f(\pi) \geq 0$ and it achieves the minimum value $f(\pi) = 0$ when $\pi(a|s) = \pi^{*}(a|s)$. Moreover, we can tell $P_{\Phi} > 0$ from its definition, which leads to $f_{\text{ood}}(\pi) > 0$. Subtracting $\tau f_{\text{ood}}(\pi)$ from $f(\pi)$, we get:
    \begin{equation}
        \begin{split}
            f(\pi) - \tau f_{\text{ood}}(\pi) &=  \pi(a|s)^T P_{\Phi} \left[ \frac{\pi(a|s) - \pi_{\beta}(a|s) - \tau \pi_{\text{ood}}(a|s)}{\pi_{\beta}(a|s)} \right] \\
            &=  \pi(a|s)^T P_{\Phi} \left[ \frac{\pi_{\text{idd}}(a|s) - \pi_{\beta}(a|s) + (1 - \tau) \pi_{\text{ood}}(a|s)}{\pi_{\beta}(a|s)} \right] \\
            &=  \pi(a|s)^T P_{\Phi} \left[ \frac{\pi_{\text{idd}}(a|s) - \pi_{\beta}(a|s)}{\pi_{\beta}(a|s)}\right] +  \pi(a|s)^T P_{\Phi}  \left[ \frac{(1-\tau)\pi_{\text{ood}}(a|s)}{\pi_{\beta}(a|s)}\right] \\
            &= f_{\text{idd}}(\pi) + (1-\tau) f_{\text{ood}}(\pi)
        \end{split}
    \end{equation}
    where $f_{\text{idd}}(\pi) = \pi(a|s)^T P_{\Phi} \left[ \frac{\pi_{\text{idd}}(a|s) - \pi_{\beta}(a|s)}{\pi_{\beta}(a|s)}\right]$. Since $f_{\text{ood}}(\pi)$ is always positive, we consider the following two cases, where $f_{\text{idd}}(\pi)$ becomes either positive or negative.

    \paragraph{Case 1. $f_{\text{idd}}(\pi)\geq0$}
    In this case, both $f_{\text{idd}}(\pi)$ and $f_{\text{ood}}(\pi)$ are positive, thus $f(\pi) - \tau f_{\text{ood}}(\pi)\geq0$ under the condition $0 < \tau \leq 1$. 

    \paragraph{Case 2. $f_{\text{idd}}(\pi) < 0$}
    In this case, it needs to satisfy $-f_{\text{idd}}(\pi) \leq (1-\tau) f_{\text{ood}}(\pi)$, which becomes 
    \begin{equation}
        \begin{split}
            &\qquad -f_{\text{idd}}(\pi) \leq (1-\tau) f_{\text{ood}}(\pi) \\
            &\Longleftrightarrow -(f_{\text{idd}}(\pi) + f_{\text{ood}}(\pi)) \leq -\tau f_{\text{ood}}(\pi) \\
            &\Longleftrightarrow f(\pi) \geq \tau f_{\text{ood}}(\pi) \\
            &\Longleftrightarrow \tau \leq \frac{f(\pi)}{f_{\text{ood}}(\pi) }
        \end{split}
    \end{equation}
    Since $\frac{f(\pi)}{f_{\text{ood}}(\pi) } \geq 0$, we can find a parameter $\tau$ in $0<\tau\leq 1$ to make $f(\pi) - \tau f_{\text{ood}}(\pi)\geq0$.

    Therefore, the inequality $f(\pi) - \tau f_{\text{ood}}(\pi)\geq0$ is always true when $0 < \tau \leq 1$.
\end{proof}

With Preposition\ref{prop:fpi-difference}, we can proof Theorem\ref{thm:scq-state-value}.
\begin{proof}
    We define the tuning parameter for CQL in \eqref{eq:cql-q-update} as $\alpha_{\text{k, cql}}$. We also define the parameter for SCQ in \eqref{eq:scq-q-update} as $\alpha_{k, \text{scq}} = \tau \alpha_{k, \text{cql}}$ where $0 < \tau \leq 1$. The estimated state value $V_{\text{SCQ}}^{k+1}$ from SCQ can be calculated from \eqref{eq:esitamted-q-relationship} as:
    \begin{equation} \label{eq:state-value-scq}
        \begin{split}
            V_{\text{SCQ}}^{k+1}(s) &= \pi(a|s)^{\top} \hat{Q}^{k+1}_{\text{SCQ}}(s,a) \\
            &=  \pi(a|s)^{\top} \hat{Q}^{k+1}_{\text{LSTD-Q}}(s,a) - \alpha_{k, \text{scq}} f_{\text{ood}}(\pi) \\
            &= \hat{V}^{k+1}_{\text{LSTD-Q}} - \alpha_{k, \text{scq}} f_{\text{ood}}(\pi) 
        \end{split}
    \end{equation}
    In addition, we can show $V_{\text{SCQ}}^{k+1}$ is lower bound of the true state value $V^{k+1}(s)$ at every state.
    \begin{equation} \label{eq:scq-state-upper-bound}
        \begin{split}
             V_{\text{SCQ}}^{k+1}(s) &= \hat{V}^{k+1}_{\text{LSTD-Q}}(s) - \alpha_{k, \text{scq}} f_{\text{ood}}(\pi) \\
             &= V^{k+1}(s) - (V^{k+1}(s) - \hat{V}^{k+1}_{\text{LSTD-Q}}(s))  - \alpha_{k, \text{scq}} f_{\text{ood}}(\pi) \\
             &\leq V^{k+1}(s)
        \end{split}
    \end{equation}
    where $\alpha_{k, \text{scq}} = \text{max} \; (\frac{\hat{V}^{k+1}_{\text{LSTD-Q}}(s) - V^{k+1}(s)}{f_{\text{ood}}(\pi)}, 0)$.
    
    Using the Theorem.D.1 in CQL\cite{cql}, \eqref{eq:state-value-scq} and \eqref{eq:scq-state-upper-bound}, we get the following inequality. 
    \begin{equation}
    \begin{split}
        \hat{V}^{k+1}_{\text{CQL}}(s) &\leq \hat{V}^{k+1}_{\text{LSTD-Q}}(s) - \alpha_{k, \text{cql}} f(\pi) \\
        &\leq  \hat{V}^{k+1}_{\text{LSTD-Q}}(s) - \alpha_{k, \text{cql}} \tau f_{\text{ood}}(\pi) \\
        &= \hat{V}^{k+1}_{\text{LSTD-Q}}(s) -  \alpha_{k, \text{scq}} f_{\text{ood}}(\pi) \\
        &= V_{\text{SCQ}}^{k+1}(s)
    \end{split}
    \end{equation}
    Therefore, $\hat{V}^{k+1}_{\text{CQL}}(s) \leq V_{\text{SCQ}}^{k+1}(s) \leq V^{k+1}(s)$ for all states $s$. Note that from the first row to the second row, we use the Preposition\ref{prop:fpi-difference}.
\end{proof}
This proof suggests that if $\alpha_{k, \text{cql}}$ satisfies the Theorem.D.1 in CQL paper, we can always find $\tau$ that makes the state value $V_{\text{SCQ}}^{k+1}(s)$ larger than $V_{\text{CQL}}^{k+1}(s)$.

\section{Experimental details} \label{sec:appendix-experimental-details}
In our experiment, we use Soft-Actor-Critic (SAC) \cite{sac, sac2, sac3} as our baseline for the implementation. However, SCQ can be incorporated into other model-free RL algorithms, e.g. TD3\cite{td3}. Our network follows a similar structure described in \cite{mcq}. We use Pytorch and Python=3.8 to implement the algorithm and train it with Nvidia GeForce RTX 4090. 

We summarize the hyperparameters in Table.\ref{tab:experiment-parameter-details}. In the experiment, CVAE is employed to distinguish between out-of-distribution (OOD) and in-distribution (IDD) actions, maintaining a minimalistic architecture for the CVAE to ensure clarity of the underlying processes. In addition, We find using a cosine annealing scheduler \cite{sgdr} for the actor's learning rate helps SCQ to increase its stability and performance. Note that further fine-tuning of the learning rates and modifications to the architecture of the network may lead to enhanced model robustness and performance optimization.

Table.\ref{tab:experiment-alpha} also describes the value of $\alpha$ used in Eq.\ref{eq:scq-q-update}. This value decides the conservatives of the learned Q-values, and it highly affects the performance of the algorithm. One disadvantage of SCQ is that it needs to change $\alpha$ depending on the environment. However, we believe that this hyperparameter $\alpha$ can be automatically tuned via Lagrangian dual gradient descent used in \cite{cql}. More implementation details can be found in our official code.

\begin{table}[htbp]
\centering
\caption{Hyperparameters setup for SCQ.}
\label{tab:experiment-parameter-details}
\begin{tabular}{@{}lc@{}}
\toprule
\textbf{Hyperparameter} & \textbf{Value} \\
\midrule
\multicolumn{2}{l}{SAC} \\
Dimension of actor hidden layer & 400 \\
Dimension of critic hidden layer & 400 \\
Number of actor hidden layers & 2 \\
Number of critic hidden layers & 2 \\
Nonlinearity & ReLU \\
Batch size & 256 \\
Critic learning rate & $3 \times 10^{-4}$ \hspace{1mm} ($1 \times 10^{-4}$ for Antmaze)\\
Actor learning rate & $3 \times 10^{-4}$ \\
Optimizer & Adam \cite{adam} \\
Discount factor & 0.99 \\
Maximum log std & 2 \\
Minimum log std & $-3$ \\
Use automatic entropy tuning & Yes \\
Use actor learning rate scheduler & Yes (scheduler \cite{sgdr}) \\
Target update rate & $5 \times 10^{-3}$ \\
\midrule
\multicolumn{2}{l}{CVAE} \\
Encoder \& Decoder hidden dimension & 750 \\
Number of Hidden layers & 1 \\
Nonlinearity & ReLU \\
CVAE learning rate & $1 \times 10^{-3}$ \\
Batch size & 256 \\
Latent dimension & $2\times$ action dimension \\
\bottomrule
\end{tabular}
\end{table}

\begin{table}[htbp]
\centering
\caption{SCQ hyperparameter $\alpha$ used experiments on D4RL MuJoCo-Gym and Antmaze "-v2" datasets.}
\label{tab:experiment-alpha}
\begin{tabular}{lcc}
\toprule
\textbf{Task Name} & \textbf{critic hyperparameter} $\alpha$  & \textbf{actor hyperparameter} $\beta$\\
\midrule
halfcheetah-random & 0.1  & -\\
hopper-random      & 1.0  & - \\
walker2d-random    & 15.0 & -\\
halfcheetah-medium & 0.05 & -\\
hopper-medium      & 2.5  & -\\
walker2d-medium    & 2.0  & -\\
halfcheetah-medium-replay  & 0.2  & -\\
hopper-medium-replay       & 1.0  & -\\
walker2d-medium-replay     & 2.0  & - \\
halfcheetah-medium-expert  & 4.0  & - \\
hopper-medium-expert       & 15.0 & - \\
walker2d-medium-expert     & 1.5  & -\\
halfcheetah-expert         & 5.0  & -\\
hopper-expert              & 10.0 & - \\
walker2d-expert            & 1.0  & -\\
antmaze-umaze         & 2.0   & -\\
antmaze-umaze-diverse & 0.3   & 1.0 \\
antmaze-medium-play    & 0.1  & 1.0 \\
antmaze-medium-diverse & 1.0  & 1.0 \\
antmaze-large-play    & 0.1   & 0.5\\
antmaze-large-diverse & 0.05  & 0.5\\
\bottomrule
\end{tabular}
\end{table}

\section{Additional Ablation studies (Parameter changes)} \label{sec:appendix-additional-ablation-studies}
In addition to the ablation studies in Section.\ref{sec:exp-ablation-study}, we change the parameter $\alpha$ in En.\eqref{eq:scq-q-update} to see the performance difference. Table.\ref{tab:exp-extra-ablation} shows the parameter change experiment results. The highest score with optimal parameter $\alpha$ is highlighted in bold. From this result, we can tell if $\alpha$ is too low (e.g $\alpha=0$), the performance of the algorithm deteriorates. Moreover, when we make $\alpha$ larger, the score also gets lower. This is because CVAE separation of OOD and IDD actions is not completely accurate, and thus SCQ lowers Q-values even at IDD points. We also show the learned average Q values in Fig.\ref{fig:ablation-halfcheetah-q}. From this figure, we can tell that Q values go up when we set small $\alpha$. On the other hand, when we set large values for $\alpha$, the learned Q values get small due to the over-penalization of Q values. Therefore, choosing the optimal hyper-parameter $\alpha$ plays an important role in SCQ to get high performance. 

\begin{table}[h]
\caption{Parameter change results of SCQ on halfCheetah tasks. We evaluated each method using the "-v2" datasets across 5 random seeds for the non-optimal parameters. The label abbreviation follows the same order as Table.\ref{tab:mujoco-result}.}
\label{tab:exp-extra-ablation}
\centering
\begin{tabular}{lccccc}
\toprule
\diagbox{Tasks}{$\alpha$} & 0.0 & 1.0 & 10.0 & optimal \\
\midrule
halfcheetah-m   & 48.3 $\pm$ 19.2 & 60.3 $\pm$ 0.4 & 51.1 $\pm$ 0.4 & 46.5 $\pm$ 0.3  & \textbf{68.3 $\pm$ 1.64} \\
halfcheetah-m-r & 37.5 $\pm$ 12.3 & 50.7 $\pm$ 0.4 & 46.0 $\pm$ 0.7 & 43.7 $\pm$ 0.2  & \textbf{57.2 $\pm$ 1.78} \\
halfcheetah-m-e & 16.6 $\pm$ 6.7  & 92.2 $\pm$ 8.5 & \textbf{98.8 $\pm$ 0.6} & 96.8 $\pm$ 0.3 & \textbf{98.8 $\pm$ 0.6} \\
\bottomrule
\end{tabular}
\end{table}

\begin{figure}[h]
  \centering
  \includegraphics[scale=0.35]{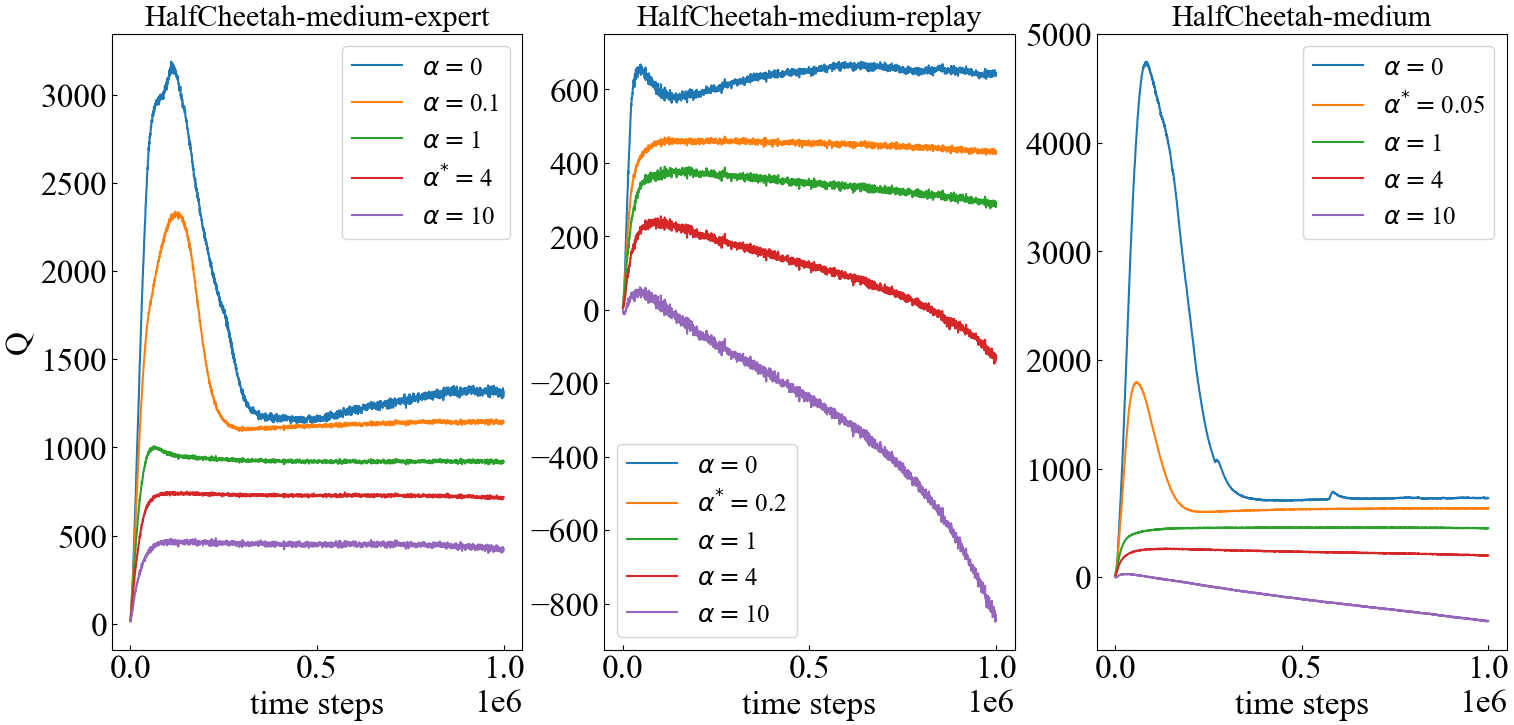}
  \caption{Comparison of Q values with different $\alpha$.}
  \label{fig:ablation-halfcheetah-q}
\end{figure}

\section{Learning Curve}
In this section, we show the learning curve (normalized score over the whole learning time). We illustrate the learning curve in Fig.\ref{fig:learning-score}.

\begin{figure}[h]
  \centering
  \includegraphics[scale=0.365]{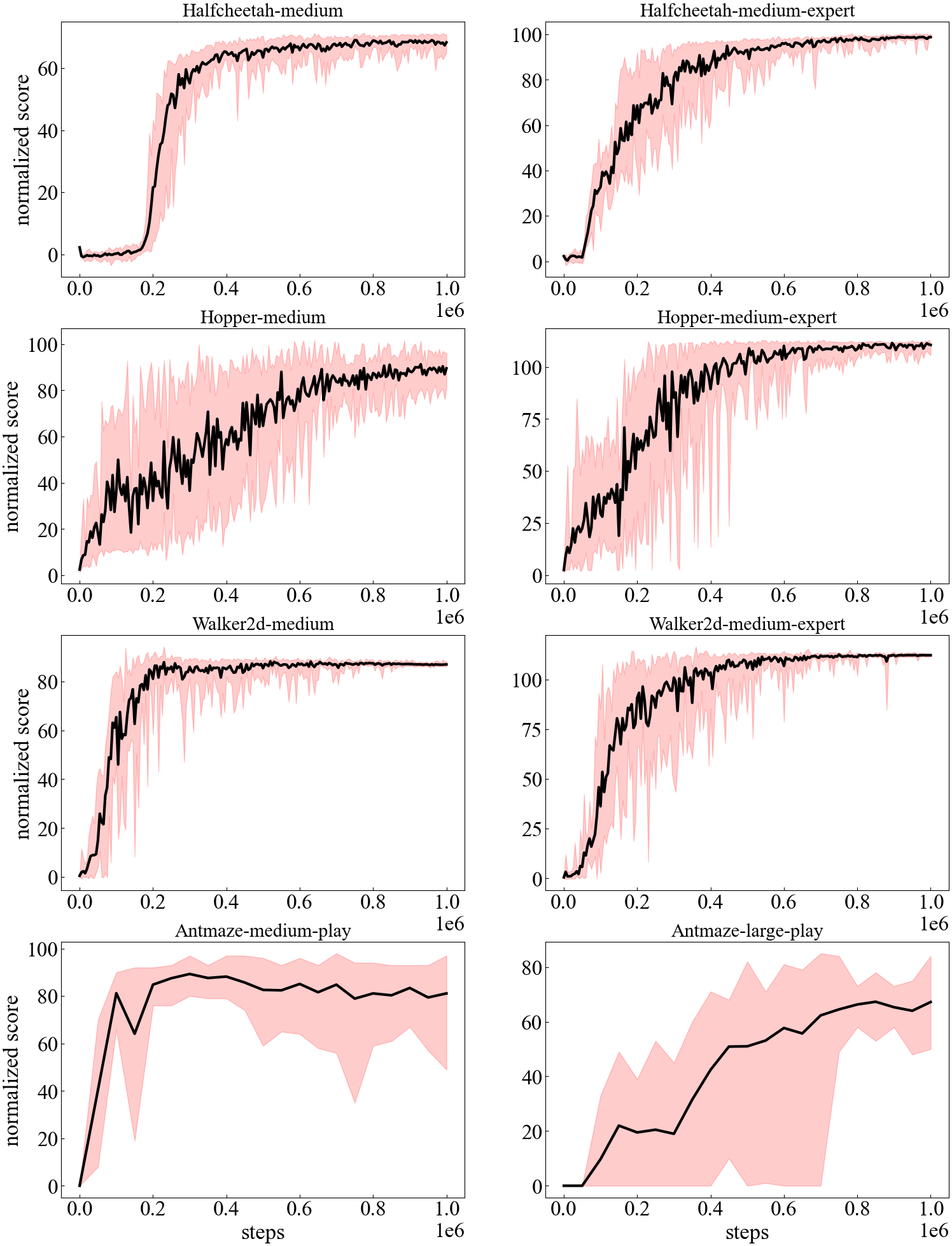}
  \caption{Learning curve over during the training. The black line shows the average normalized score over 10 seeds and the red range area shows the \textbf{maximum and minimum} values at each time step.}
  \label{fig:learning-score}
\end{figure}

\end{document}